\documentclass[11pt]{article}
\pdfoutput=1
\usepackage[protrusion=true,expansion=true]{microtype}
\usepackage[T1]{fontenc}
\usepackage{lmodern}
\usepackage{slantsc}
\usepackage{amsmath,amssymb,amsfonts,amsthm}
\usepackage{subcaption}
\usepackage{graphicx}
\usepackage{fullpage}
\usepackage{setspace}
\usepackage{booktabs}
\usepackage[backref=page]{hyperref}
\usepackage{color}
\usepackage{wrapfig}
\usepackage{tikz}
\usetikzlibrary{decorations.pathreplacing}
\usepackage{algorithm}
\usepackage{algorithmic}
\usepackage[framemethod=tikz]{mdframed}
\usepackage{xspace}
\usepackage{tabu}
\usepackage{pgfplots}
\usepackage{framed}
\usepackage{thmtools}
\usepackage{thm-restate}
\pgfplotsset{compat=1.5}

\newtheorem{theorem}{Theorem}[section]

\newtheorem{lemma}[theorem]{Lemma}

\newtheorem{definition}[theorem]{Definition}

\newtheorem{fact}[theorem]{Fact}

\newenvironment{proofof}[1]{\begin{trivlist} \item {\bf Proof
#1:~~}}
  {\qed\end{trivlist}}

\newcommand{\namedref}[2]{\hyperref[#2]{#1~\ref*{#2}}}
\newcommand{\thmlab}[1]{\label{thm:#1}}
\newcommand{\thmref}[1]{\namedref{Theorem}{thm:#1}}
\newcommand{\lemlab}[1]{\label{lem:#1}}
\newcommand{\lemref}[1]{\namedref{Lemma}{lem:#1}}

\newcommand{\seclab}[1]{\label{sec:#1}}
\newcommand{\secref}[1]{\namedref{Section}{sec:#1}}

\newcommand{\factlab}[1]{\label{fact:#1}}
\newcommand{\factref}[1]{\namedref{Fact}{fact:#1}}

\newcommand{\figlab}[1]{\label{fig:#1}}
\newcommand{\figref}[1]{\namedref{Figure}{fig:#1}}
\newcommand{\alglab}[1]{\label{alg:#1}}
\newcommand{\algref}[1]{\namedref{Algorithm}{alg:#1}}

\newcommand{\PPr}[1]{\ensuremath{\mathbf{Pr}\left[#1\right]}}

\newcommand{\Ex}[1]{\ensuremath{\mathbb{E}\left[#1\right]}}

\renewcommand{\O}[1]{\ensuremath{\mathcal{O}\left(#1\right)}}
\newcommand{\tO}[1]{\ensuremath{\tilde{\mathcal{O}}\left(#1\right)}}
\newcommand{\eps}{\varepsilon}

\def \calP    {\mdef{\mathcal{P}}}

\newcommand{\mdef}[1]{{\ensuremath{#1}}\xspace}  % Math Def which can also be used in normal text.
\DeclareMathOperator*{\polylog}{polylog}
\DeclareMathOperator*{\poly}{poly}

\newcommand{\ignore}[1]{}

\newif\ifnotes\notestrue %set this to true if notes are visible and to false (next line) if they should be hidden
\ifnotes
\newcommand{\samson}[1]{\textcolor{purple}{{\bf (Samson:} {#1}{\bf ) }} \marginpar{\tiny\bf
             \begin{minipage}[t]{0.5in}
               \raggedright S:
            \end{minipage}}}            							
\else
\newcommand{\samson}[1]{}
\fi

\makeatletter
\renewcommand*{\@fnsymbol}[1]{\textcolor{mahogany}{\ensuremath{\ifcase#1\or *\or \dagger\or \ddagger\or
 \mathsection\or \triangledown\or \mathparagraph\or \|\or **\or \dagger\dagger
   \or \ddagger\ddagger \else\@ctrerr\fi}}}
\makeatother

\providecommand{\email}[1]{\href{mailto:#1}{\nolinkurl{#1}\xspace}}

\definecolor{mahogany}{rgb}{0.75, 0.25, 0.0}
\definecolor{darkblue}{rgb}{0.0, 0.0, 0.55}
\definecolor{darkpastelgreen}{rgb}{0.01, 0.75, 0.24}
\definecolor{bleudefrance}{rgb}{0.19, 0.55, 0.91}
\definecolor{darkgreen}{rgb}{0.0, 0.2, 0.13}
\definecolor{darkgoldenrod}{rgb}{0.72, 0.53, 0.04}
\definecolor{darkred}{rgb}{0.55, 0.0, 0.0}
\hypersetup{
     colorlinks   = true,
     citecolor    = mahogany,
     linkcolor		= olive
}

\allowdisplaybreaks
\begin{document}

\allowdisplaybreaks

\title{Better Bounds for the Distributed Experts Problem}
\author{
David P. Woodruff\thanks{Carnegie Mellon University. \email{dpwoodru@gmail.com}} 
\and 
Samson Zhou\thanks{Texas A\&M University. 
\email{samsonzhou@gmail.com}}
}

\maketitle

\begin{abstract}
In this paper, we study the distributed experts problem, where $n$ experts are distributed across $s$ servers for $T$ timesteps. The loss of each expert at each time $t$ is the $\ell_p$ norm of the vector that consists of the losses of the expert at each of the $s$ servers at time $t$. The goal is to minimize the regret $R$, i.e., the loss of the distributed protocol compared to the loss of the best expert, amortized over the all $T$ times, while using the minimum amount of communication. We give a protocol that achieves regret roughly $R\gtrsim\frac{1}{\sqrt{T}\cdot\text{poly}\log(nsT)}$, using $\mathcal{O}\left(\frac{n}{R^2}+\frac{s}{R^2}\right)\cdot\max(s^{1-2/p},1)\cdot\text{poly}\log(nsT)$ bits of communication, which improves on previous work. 
\end{abstract}
\section{Introduction}
Many modern applications require sequential decisions based on predictions from a large set of distributed experts. 
For example, in hyperparameter optimization or model selection, each expert corresponds to different architectures or pre-trained models evaluated across separate datasets, and the algorithm must dynamically choose which to deploy. 
In recommendation systems, experts encode distinct ranking functions, with feedback only from selected actions, while in reinforcement learning, experts correspond to policies trained on separate environments. 
Across these applications, the decision-maker must aggregate information from multiple sources and adapt to evolving losses in distributed systems. 
This setting is formalized by the online prediction with expert advice problem, where $n$ experts incur losses at each step over $T$ rounds, and the algorithm selects an expert based on observed history to minimize cumulative regret, defined relative to the best expert in hindsight. 
Classical algorithms, including the Exponential Weights Algorithm (EWA) and Multiplicative Weights Update (MWU), achieve optimal regret $\O{\sqrt{\frac{\log n}{T}}}$ in the full-information setting~\cite{AroraHK12}, while EXP3 attains near-optimal regret $\O{\sqrt{\frac{n\log n}{T}}}$ in the bandit setting~\cite{AuerCFS02}. 
As the number $n$ of experts and prediction rounds $T$ grow, running these algorithms centrally becomes computationally expensive, motivating the study of scalable models. 
Prior work has explored streaming approaches~\cite{SrinivasWXZ22,PengZ23,WoodruffZZ23a,PengR23,AamandCNS23,BravermanGWWZ26}, which process expert losses sequentially while maintaining only compact summaries of historical data.

In this work, we consider a complementary challenge: \emph{distributed online learning with experts}. 
Here, expert costs are partitioned across $s$ servers, and a central coordinator aims to implement a low-regret algorithm while minimizing communication with the servers. 
This distributed formulation arises naturally in large-scale online optimization and hyperparameter tuning across multiple datasets, or in settings where each server holds a private subset of data. 
For example, each expert may correspond to a model trained on the aggregated data across servers, with the total loss computed as a sum of per-server losses, as in the HPO-B benchmark~\cite{Pineda-ArangoJW21}. 
Other aggregation functions, such as the maximum loss across servers, can also be relevant when individual server costs are constrained, highlighting the flexibility of the framework to accommodate different practical objectives. 

Our focus is on understanding the tradeoff between communication and regret in distributed online learning with experts. 
Unlike memory-constrained streaming approaches, the coordinator can maintain full information on all experts, so the primary challenge is minimizing communication. 
We study this in the message-passing model, where the coordinator communicates privately with each server in sequential rounds. 
Prior work has largely focused on $\ell_1$ loss~\cite{jia2025communication}, but many applications—such as risk-sensitive optimization, robust model selection, and distributed control—require general $\ell_p$ losses with $p>1$. 
Different values of $p$ balance robustness and penalization of large deviations: $\ell_\infty$ minimizes the maximum loss, intermediate values like $\ell_2$ moderately penalize large deviations, and values near $\ell_1$ emphasize outlier robustness. 
For example, $\ell_2$ loss is extensively studied in streaming settings due to connections with entropy~\cite{HarveyNO08,WoodruffZ21}. 
We therefore develop algorithms that achieve near-optimal regret for general $\ell_p$ losses while keeping communication provably low.

%Our study aims to determine the optimal tradeoff between regret and communication in this distributed setup over $T$ rounds. 
%While one might assume that memory-efficient streaming algorithms~\cite{SrinivasWXZ22,PengZ23,WoodruffZZ23a,PengR23,AamandCNS23} could be directly applied, a key distinction is that our central coordinator is not memory-constrained and can maintain weights for all experts. 
%Although standard methods like EWA or MWU require prohibitively high communication costs of $\Omega(s n)$ per round, we propose a distributed EXP3-based approach that reduces communication while preserving performance by maintaining expert weights locally.

%We analyze communication-efficient algorithms under different aggregation functions, including summation and $\ell_p$-norms, and establish near-optimal upper and lower bounds on communication costs. 
%Our results demonstrate that, even under adversarial cost streams, our protocols achieve a regret-communication tradeoff that closely matches theoretical lower bounds. 
%Additionally, we validate our methods through empirical evaluations on real-world and synthetic datasets.

\subsection{Distributed Online Learning with Experts in the Coordinator Model}
We consider the distributed online learning with experts problem in the coordinator (message-passing) model. 
There are $s$ servers, denoted $[s] = \{1,2,\ldots,s\}$. 
At each round $t \in [T]$, each server $j\in[s]$ receives a local loss vector $(\ell_1(j,t), \ldots, \ell_n(j,t)) \in \mathbb{R}^n$. 
The goal is to minimize the cumulative regret of the algorithm while using minimal communication between servers.

\paragraph{Online learning with experts.} 
In the online learning with experts problem, there are $n$ experts who make predictions across a time horizon of length $T$. 
For each time $t\in[T]$, an algorithm selects an expert $i_t\in[n]$ and then observes the loss vector $(L_1(t),\ldots,L_n(t))$, incurring loss $L_{i_t}(t)$. 
The total loss of the algorithm is $\sum_{t\in[T]} L_{i_t}(t)$, and the best expert $i^*$ incurs $\sum_{t\in[T]} L_i(t)$. 
The regret of the algorithm is defined by
\[R = \frac{1}{T} \left(\sum_{t\in[T]} L_{i_t}(t) - \sum_{t\in[T]} L_{i^*}(t)\right),\]
which the algorithm aims to minimize.
In the distributed setting, the losses $L_i(t)$ are not given explicitly. 
Instead, the loss of each is computed across servers via the $L_p$ loss $L_i(t) = \left(\sum_{j \in [s]} \ell_i(j,t)^p \right)^{1/p}$, where $p \ge 1$ is fixed and known in advance. 
This distributed formulation captures scenarios where expert evaluations are partitioned across multiple servers or datasets.

\paragraph{Communication complexity.}
Since the losses $L_i(t)$ are not explicitly given, the servers must collectively execute a distributed protocol $\Pi$. 
Each server has private randomness, and all share public randomness. 
In the coordinator model, servers communicate only with the coordinator; in the message-passing model, direct server-to-server communication provides at most a constant-factor improvement, so we treat the two interchangeably. 
We assume $\Pi$ is sequential and round-based: in each round, the coordinator interacts with a subset of servers and waits for responses. 
The protocol is self-delimiting so servers know when a round concludes. 
Let $\Pi(r)$ denote the transcript of round $r$, with $|\Pi(r)|$ its communication cost. 
The total communication is $\sum_r |\Pi(r)|$. 
Designing protocols that balance total communication with low regret is the central challenge in distributed online learning with experts.

\subsection{Our Contributions}
In this paper, we study the distributed online learning with experts problem. 
As a warm-up, we show that there exists a distributed protocol that achieves a nearly optimal regret of $\O{s^{1/p}\sqrt{\frac{\log n}{T}}}$, with communication $\tO{nT+sT}$:
\begin{restatable}{theorem}{thmsimple}
\thmlab{thm:simple}
Let $b>a>0$ be fixed constants and suppose $\ell_i(j,t)\in[a,b]$ for all $t\in[T]$, $i\in[n]$ and $j\in[s]$. 
There exists an algorithm that achieves expected regret at most $\O{s^{1/p}\sqrt{\frac{\log n}{T}}}$ and with high probability, uses total communication at most $\O{sT}+nT\cdot\polylog(nsT)$ bits. 
\end{restatable}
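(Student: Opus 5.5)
The plan is to run the Exponential Weights Algorithm (equivalently MWU) at the coordinator on \emph{estimates} $\widehat{L}_i(t)$ of the true losses $L_i(t)=\|(\ell_i(j,t))_{j\in[s]}\|_p$, and then charge the estimation error to the regret. Since every $\ell_i(j,t)\in[a,b]$, we have $L_i(t)\in[a s^{1/p}, b s^{1/p}]$. Rescaling losses by $b s^{1/p}$ places them in $[0,1]$. The standard EWA guarantee~\cite{AroraHK12} then gives regret $\O{\sqrt{\log n/T}}$ on the rescaled losses, which is $\O{s^{1/p}\sqrt{\log n/T}}$ in original units. It therefore suffices to produce, for every $i$ and $t$, an estimate $\widehat{L}_i(t)$ with $|\widehat{L}_i(t)-L_i(t)|\le \eps\, L_i(t)$, where $\eps=\O{\sqrt{\log n/T}}$. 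With such estimates, EWA run on $\widehat{L}$ has regret on $L$ at most its regret on $\widehat{L}$ plus $2\max_{i,t}|\widehat{L}_i(t)-L_i(t)|\le 2\eps b s^{1/p}$, which matches the target bound.

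The naive protocol has each server send all $n$ losses to the coordinator, which costs $nsT$ words and is too much. To get $\O{sT}+nT\polylog(nsT)$ bits, I would use a sampling-based $\ell_p$ estimation per expert that exploits the bounded ratio $b/a$. The quantity $L_i(t)^p=\sum_j \ell_i(j,t)^p$ is a sum of $s$ terms, each lying in $[a^p,b^p]$. Sampling a uniformly random server (using public randomness, so that sampling costs nothing to coordinate) and scaling by $s$ therefore gives an unbiased estimator with relative variance $\O{(b/a)^{2p}}=\O{1}$.

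The difficulty is that relative accuracy $\eps\approx 1/\sqrt{T}$ would need $1/\eps^2=T$ samples per expert per round, which is too many. I would avoid this by not demanding small per-round error and instead using unbiasedness across rounds. Feeding EWA an unbiased estimate of $L_i(t)$ with bounded range keeps the expected regret within $\O{s^{1/p}\sqrt{\log n/T}}$, by the standard argument that EWA with bounded unbiased loss estimates suffers only a constant-factor variance penalty.

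The $p$-th root breaks unbiasedness, so I would work carefully here. The first option is to run the algorithm on losses $L_i(t)^p/s$, which lie in $[a^p,b^p]$. This is not the right objective, however, since the regret is defined via $L_i$ rather than $L_i^p$. The better option is to obtain an estimate of $L_i(t)$ with bias at most $\O{s^{1/p}/\sqrt{T}}$. To do this, take $\O{\log(nsT)}$ independent server samples per expert (giving $\polylog$ bits per expert per round), form the averaged estimate of $L_i(t)^p$, and take its $p$-th root. A second-order Taylor bound shows the bias is $\O{s^{1/p}\cdot\text{Var}}$, which is small once the variance is a constant fraction. I expect this bias control to be the main obstacle. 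If a constant number of samples does not give bias $\O{1/\sqrt T}$, I would fall back on debiasing, for example via a randomized-truncation / Russian-roulette series estimator for $x^{1/p}$ around the known scale $s^{1/p}$, which uses $\O{1}$ expected extra samples.

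The communication accounting is as follows. In each round the coordinator broadcasts a round delimiter to all $s$ servers at $\O{1}$ bits each, for $\O{sT}$ total. Each expert's $\polylog$ samples contribute $\O{\log(nsT)}$ bits apiece after rounding losses to $1/\poly(T)$ precision, for $nT\polylog(nsT)$ total. Chernoff bounds on the sample counts give the high-probability statement.
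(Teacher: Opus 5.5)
\paragraph{Gap: the $p$-th-root bias.} The gap is at the step you flag as the main obstacle: turning an unbiased estimate of $L_i(t)^p$ into an essentially unbiased estimate of $L_i(t)$. Your primary route does not give bias $\O{s^{1/p}/\sqrt{T}}$. That route averages $k=\polylog(nsT)$ samples $X_l=s\,\ell_i(J_l,t)^p$ and then takes the $p$-th root.

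A single sample has relative variance $\sigma^2\le(b/a)^{2p}=\O{1}$, and $\sigma^2=\Theta(1)$ for an expert whose losses are spread over $[a,b]$. The second-order term of $x\mapsto x^{1/p}$ then gives bias $\approx-\frac{p-1}{2p^2}\,L_i(t)\,\sigma^2/k=\Theta(s^{1/p}/k)$ for $p>1$. So constant relative variance per sample gives relative bias of order $1/k$, not $1/\sqrt{T}$.

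This bias is expert-dependent. It is zero for an expert whose losses agree across servers, and of order $s^{1/p}/k$ for one whose losses are split between $a$ and $b$. You can therefore tune two such experts so that the truly better one looks worse in expectation, and MWU then pays $\Theta(s^{1/p}/k)$ per round. Reaching $s^{1/p}/\sqrt{T}$ this way needs $k\gtrsim\sqrt{T}$, i.e.\ $nT^{3/2}$ communication. So the whole argument rests on the debiasing fallback, which you only name.

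\paragraph{Completing the fallback.} The fallback can be completed, and it then gives a proof genuinely different from the paper's. The details are the actual content.
\begin{itemize}
\item \emph{Setup.} Set $m_0=s(a^p+b^p)/2$ and $Y_l=X_l/m_0-1\in[-r,r]$ with $r=\frac{b^p-a^p}{b^p+a^p}<1$. The series is for $x^{1/p}$ in $x=L_i(t)^p$ about $m_0$, not about $s^{1/p}$. It converges on every sample only because $b/a$ is bounded.
\item \emph{Estimator.} With $\mu=\Ex{Y_l}$ you have $L_i(t)=m_0^{1/p}\sum_{k\ge0}\binom{1/p}{k}\mu^k$, and $\prod_{l\le k}Y_l$ is unbiased for $\mu^k$. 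Draw $K$ with $\PPr{K=k}=(1-r)r^k$ and output $m_0^{1/p}\binom{1/p}{K}(1-r)^{-1}r^{-K}\prod_{l\le K}Y_l$.
\item \emph{Properties.} This estimator is exactly unbiased and uses $r/(1-r)=\O{(b/a)^p}$ expected samples. Since $|\binom{1/p}{k}|\le 1$ for $p\ge1$, its absolute value is at most $m_0^{1/p}/(1-r)=\O{s^{1/p}}$.
\item \emph{Negativity.} The estimator can be negative. Add a common constant $\O{s^{1/p}}$ to every expert's estimate, which leaves MWU unchanged, before invoking the nonnegative-loss argument of \lemref{lem:regret:simulate}.
\end{itemize}

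\paragraph{Comparison with the paper.} The paper never faces the nonlinearity. By \lemref{lem:max:stable}, $\max_j\ell_i(j,t)/e_j^{1/p}$ has exactly the law of $L_i(t)/e^{1/p}$, so dividing by a known constant already gives unbiasedness. The geometric mean of $B=\lceil 3/p\rceil$ copies (\lemref{lem:geo:exp:var}) then controls the heavy tail of $e^{-1/p}$. The range $[a,b]$ is used only to show that the threshold $s^{1/p}/(100\log(nsT))$ rarely misses the maximum and is crossed by only $\polylog(nsT)$ servers.

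Your coordinator-driven sampling is more elementary, and you could even drop the $\O{sT}$ sync term. However, it relies on $b/a=\O{1}$ for both the variance and the series convergence. The paper's exponential-scaling estimator is the one it extends to the unbounded setting of \thmref{thm:full}.
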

%We remark that the choices of the losses being within the range [1, 5] in the statement of Theorem 1.1 are arbitrary.
%The losses simply need to be within a constant factor of each other. 
By comparison, the work of \cite{jia2025communication} only achieved distributed protocols for general $\ell_p$ losses for the broadcast/blackboard model, where servers can publicly broadcast information that is visible to other servers and the communication cost of a protocol is then the total length of the broadcast messages.  
This setting is ``easier'' than the message-passing model since sending the same message to all servers in our setting would incur $s$ communication multiplicative overhead. 
Thus the techniques are fundamentally different. 
For instance, the $\ell_p$ loss algorithm by \cite{jia2025communication} for the broadcast model walks through all servers to find the server $j\in[s]$ with the largest loss for each expert $i\in[n]$. 
While each server $j$ only needs to send values larger than the previously broadcast losses for each expert in the broadcast model, resulting in roughly total communication $\tO{s+n}$, such an approach would require $\tO{ns}$ communication in our setting. 

On the other hand, \cite{jia2025communication} achieved a distributed protocol for the SUM problem in the message-passing model, i.e., $\ell_1$ loss, with $\tO{nT}+\O{Ts}$ total communication, for regret $\O{s^{1/p}\sqrt{\frac{\log n}{T}}}$. 
However, because $\ell_1$ loss is additive across servers, then standard techniques such as sampling a number of losses to be communicated, with probability proportional to their magnitudes, can work for $\ell_1$ loss but fail to work for sub-additive or super-additive losses such as $\ell_p$ losses. 
In contrast, \thmref{thm:simple} is able to achieve optimal regret across all $\ell_p$ losses. 

We remark that \cite{jia2025communication} assumed the losses are normalized so that the total loss for a single expert on a single time is at most $1$. 
By comparison, if each server is permitted to have loss $[a,b]$ by our assumption, then all experts have loss $\Omega(s^{1/p})$, which accounts for the difference in their stated bounds and our referenced bounds. 
We can parameterize \thmref{thm:simple} to obtain a more general communication-regret trade-off as follows:
\begin{restatable}{theorem}{thmtrade}
\thmlab{thm:trade}
Let $b>a>0$ be fixed constants and suppose $\ell_i(j,t)\in[a,b]$ for all $t\in[T]$, $i\in[n]$ and $j\in[s]$. 
Let $R\ge\frac{1}{\sqrt{T}}$. 
Then there exists an algorithm that achieves expected regret at most $\O{Rs^{1/p}\sqrt{\log n}}$ and with high probability, uses total communication at most $\left(\frac{n+s}{R^2}\right)\cdot\polylog(nsT)$ bits. 
\end{restatable}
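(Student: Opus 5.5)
Our plan is to reduce \thmref{thm:trade} to \thmref{thm:simple} by blocking time. Set $B=\lceil 1/(R^2)\rceil$ blocks; since $R\ge 1/\sqrt{T}$, we have $B\le T$. Partition $[T]$ into $B$ contiguous epochs of length $\tau=T/B\approx R^2T$. The protocol runs the algorithm of \thmref{thm:simple} on a compressed instance with $B$ timesteps. In this instance, the loss of expert $i$ at step $k$ is the average of $L_i(t)$ over epoch $k$, so a single expert $i_k$ is chosen at the start of epoch $k$ and held for the whole epoch. Because each per-server loss lies in $[a,b]$, each $L_i(t)$ lies in $[as^{1/p},bs^{1/p}]$, and the epoch average does too. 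Rescaling by $s^{1/p}$ therefore keeps the losses in a constant-ratio range, which is what the regret analysis of \thmref{thm:simple} uses.

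The regret accounting goes as follows. The total loss over $[T]$ equals $\tau$ times the sum of the epoch-averaged losses. The best fixed expert in hindsight is the same in both instances. The amortized regret over $T$ steps therefore equals the regret of the compressed $B$-step instance, which \thmref{thm:simple} bounds by $\O{s^{1/p}\sqrt{\log n/B}}=\O{Rs^{1/p}\sqrt{\log n}}$.

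The main obstacle is communication. \thmref{thm:simple} as a black box needs each compressed loss vector, and its protocol must estimate $\ell_p$ norms of per-server losses. The epoch-averaged loss $\frac1\tau\sum_{t}\|(\ell_i(j,t))_j\|_p$ is not itself the $\ell_p$ norm of a single server vector. So we cannot feed the aggregated per-server losses $\sum_t \ell_i(j,t)$ into the simple protocol and expect the same norm. Our first attempt is to have the compressed step sample a uniformly random time $t_k$ within epoch $k$ and use the true vector $L(t_k)$ as an unbiased estimate of the epoch average. Each server then only reports its losses at $t_k$. The sampling adds variance bounded by $O(s^{1/p})$ per step, since losses are bounded. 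A standard argument, of the kind used for MWU/EXP3 with unbiased bounded loss estimates, shows the regret against the epoch-averaged losses grows by only a constant factor in expectation. We would verify this by checking that the proof of \thmref{thm:simple} uses only unbiased estimates with bounded second moments, or by rerunning the Hedge analysis with the added expectation. The true-loss regret follows because $i_k$ is fixed over the epoch and independent of $t_k$ once we condition on the past. The best-expert comparison is handled in expectation by the same independence.

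Communication then comes to $B$ invocations of one round of the simple protocol. Each costs $\O{s}+n\polylog(nsT)$ bits, giving a total of $\O{(n+s)/R^2}\polylog(nsT)$. High-probability bounds on communication follow by a union bound over the $B\le T$ rounds, which only affects the $\polylog$ factor.
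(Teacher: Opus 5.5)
Your proof is correct, but it takes a different route from the paper.

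\textbf{The paper's route.} \algref{alg:tradeoff} stays on the original time scale. Each day is independently selected with probability $\varrho=\frac{1}{R^2T}$, known to all parties via public randomness. On selected days the one-round geometric-mean estimate of \algref{alg:simple} is scaled by $\frac{1}{\varrho}$; on other days the estimate is $0$. \lemref{lem:regret:simulate} is then applied over all $T$ days with second moment $\O{TR^2}\cdot L_i(t)^2=\O{TR^2s^{2/p}}$ (\lemref{lem:trade:bound:exp:var}). This gives regret $\O{\sqrt{TR^2s^{2/p}\log n/T}}=\O{Rs^{1/p}\sqrt{\log n}}$. Communication then needs a Chernoff bound on the random number of selected days.

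\textbf{Your route.} You use deterministic blocks with one uniformly random representative day per block, i.e., stratified rather than Bernoulli sampling. Your estimate is unbiased for the block average and has second moment $\O{s^{2/p}}$, with no inverse-probability inflation. The same bound follows by applying \lemref{lem:regret:simulate} on the shorter horizon of $\lceil 1/R^2\rceil$ blocks.

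\textbf{What each buys.} Your route makes the number of communicating rounds exactly deterministic, so no concentration over sampled days is needed. It also uses the warm-up protocol essentially as a black box, once per block. The paper's route is a small importance-weighting change to the MWU loop. That form is what the paper later generalizes in \algref{alg:full}, where the sampling probability varies with a level parameter $a$; a fixed block partition does not express this directly.

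\textbf{Two small corrections.}
\begin{enumerate}
\item The second moment, and the variance added by sampling $t_k$, is $\O{s^{2/p}}$, not $\O{s^{1/p}}$, since $L_i(t)=\Theta(s^{1/p})$. This is exactly what produces $s^{1/p}\sqrt{\log n/B}$ with your $B=\lceil 1/R^2\rceil$.
\item If your number of blocks does not divide $T$, feed MWU the estimate $\tau_k\widehat{s_i(t_k)}$ of the block sum, where $\tau_k$ is the length of block $k$. Since there are at most $T$ blocks, block lengths differ by at most a factor of $2$, so this only changes constants.
\end{enumerate}
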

We emphasize that seeking regret $R\ge\frac{1}{\sqrt{T}}$ is standard, because it is information-theoretically impossible to achieve regret $R<\frac{1}{\sqrt{T}}$~\cite{cover1966behavior}.

In contrast to \thmref{thm:trade}, \cite{jia2025communication} achieved a distributed protocol for the SUM problem, i.e., $\ell_1$ loss, with $\tO{\frac{n}{R^2}}+\O{Ts}$ total communication, for regret $R$. 
Thus not only does \thmref{thm:trade} improve on the result of \cite{jia2025communication} for general $\ell_p$ losses, but it also improves upon the $\O{Ts}$ dependency in the result of \cite{jia2025communication} to parameterization across general regret $R$. 
For example, with regret $R=\O{1}$, the protocol of \cite{jia2025communication} still has dependency $\O{Ts}$ while our protocol has dependency $\O{s}$, which is substantially less for large time horizons $T$. 
We summarize this discussion in \figref{fig:comp}. 

\begin{figure}[!htb]
\centering
{
\tabulinesep=1.1mm
\begin{tabu}{|c|c|c|}\hline
Reference & Loss function & Communication cost (bits) \\\hline\hline
\cite{jia2025communication} & $\ell_1$ loss & $\left(\frac{n}{R^2}+Ts\right)\cdot\polylog(nsT)$ \\
\thmref{thm:trade} & $\ell_p$ loss & $\left(\frac{n+s}{R^2}\right)\cdot\polylog(nsT)$ \\\hline
\end{tabu}
}
\caption{Our work is the first to study $\ell_p$ loss in the coordinator model; for the special case of $p=1$, we obtain better regret-communication tradeoffs for regret $R$.}
\figlab{fig:comp}
\end{figure}

We enable the handling of $\ell_p$ losses is to embed $\ell_p$ into $\ell_\infty$ through the use of exponential random variables, similar to the blackboard protocol of \cite{jia2025communication}.  
However, the servers cannot easily find the maximum in the coordinator model, so we require a careful analysis of subsampling and thresholding to ensure that we find the maximum without too many servers sending low values. 
Additionally, the variance of the resulting estimator is unbounded, so we utilize a geometric mean estimator. 
These novelties form the basis of our main algorithmic contribution. 
Although similar approaches have been used in other contexts such as norm estimation in the streaming model~\cite{Li08,WoodruffZ21}, this is the first time these techniques have been applied to online learning in the distributed setting, to the best of our knowledge. 
As a result, we can handle $\ell_p$ losses, which previous works cannot handle~\cite{jia2025communication}. 
We provide more details in \secref{sec:simple} and \secref{sec:tradeoff}. 
Finally, we remove the assumption that the losses are between a range of constants $[a,b]$. 
\begin{restatable}{theorem}{thmfull}
\thmlab{thm:full}
Suppose we have $\ell_i(j,t)\le 1$ for all $t\in[T]$. 
There exists an algorithm that achieves expected regret at most $\O{Rs^{1/p}\sqrt{\log n}}$ and with high probability, uses total communication at most $\left(\frac{n+s}{R^2}\right)\cdot\max(s^{1-2/p},1)\cdot\polylog(nsT)$ bits. 
\end{restatable}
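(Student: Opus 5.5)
We plan to reduce \thmref{thm:full} to the machinery behind \thmref{thm:trade}, the coordinator protocol built on an EXP3-style sampler. Recall that protocol's structure. Time is split into blocks of length roughly $R^2T$, giving about $1/R^2$ blocks. In each block the coordinator (i) samples a few experts according to the exponential weights and (ii) obtains unbiased, bounded-variance estimates of their $\ell_p$ losses. The estimate comes from embedding $\ell_p$ into $\ell_\infty$: each server scales $\ell_i(j,t)$ by $E_j^{-1/p}$ with $E_j$ exponential, and the coordinator finds the maximum by subsampling and thresholding, combined through a geometric-mean estimator. The assumption $\ell_i(j,t)\in[a,b]$ was used only in two places:
\begin{enumerate}
\item to guarantee that the maximum of the scaled losses is found with few servers exceeding the threshold;
\item to bound the relative variance of the estimator by a constant, since all $s$ coordinates are comparable.
\end{enumerate}
The regret analysis of EXP3 with an unbiased estimator of variance $V$ gives regret $\O{\sqrt{V\log n}\cdot R}$ per unit loss scale. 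So the whole task is to replace these two uses with bounds valid for arbitrary $\ell_i(j,t)\in[0,1]$, paying the factor $\max(s^{1-2/p},1)$.

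The key step is a new estimator for $L_i(t)^p=\sum_j \ell_i(j,t)^p$, or directly for $L_i(t)$, whose cost does not depend on the spread of the losses. For $p\le 2$ the $\ell_p$-to-$\ell_\infty$ embedding with a geometric mean of $\O{\log}$ independent copies still has constant relative variance regardless of the loss profile, because the max-stability property $\max_j \ell_i(j,t)/E_j^{1/p}\sim L_i(t)/E^{1/p}$ is exact. What fails without $[a,b]$ is communication: a server's scaled value may be small but still above threshold. We handle this with a doubling guess on the scale of $L_i(t)$, kept in a coordinator-maintained running estimate, and set the threshold to a constant fraction of the guess. By the exponential tail, only $\polylog$ servers exceed it in expectation. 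We then charge the failed guesses geometrically.

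For $p>2$ we instead subsample servers. The coordinator samples each server with probability $q\approx s^{-2/p}$-scaled rates, so that about $s^{1-2/p}$ servers report, and runs the embedding on the sample. By H\"older, $\|\cdot\|_2\le s^{1/2-1/p}\|\cdot\|_p$, and a Chebyshev-type bound shows this sampling gives constant relative variance for $\sum_j\ell_i(j,t)^p$ up to the $s^{1-2/p}$ overhead. This is exactly the standard $\ell_p$ sampling/heavy-hitter cost for $p>2$, and it produces the factor $\max(s^{1-2/p},1)$ in communication.

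Finally we plug the estimator into the block-EXP3 analysis of \thmref{thm:trade}. Unbiasedness (or bias at most $R$, handled by truncation) plus the variance bound gives expected regret $\O{Rs^{1/p}\sqrt{\log n}}$, using $L_i(t)\le s^{1/p}$ as the range. Communication is $\O{(n+s)/R^2}$ messages per block-sample times $\max(s^{1-2/p},1)\polylog(nsT)$. A union bound over $T$ rounds and $\log$ repetitions gives the high-probability statement.

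The main obstacle is controlling the variance of the geometric-mean/exponential estimator and the communication of the threshold search when losses can be near zero and highly nonuniform. In particular, the doubling search must not let many tiny-loss servers repeatedly report. I would first try charging each report to a constant-probability event under the max-stable distribution, and otherwise cap reports per round at $\polylog$ and argue via a truncation that the resulting bias is $\O{R}$.
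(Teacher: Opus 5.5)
The gap is in the mechanism for arbitrary losses in $[0,1]$, which you leave unresolved as ``the main obstacle.'' A doubling guess on the scale of each $L_i(t)$, with the reporting threshold set to a fraction of that guess, requires each server to learn a separate threshold for every expert. That costs on the order of $ns$ words per sampled step in the coordinator model. The paper notes this exact issue for the broadcast protocol of \cite{jia2025communication}, which tracks a per-expert running maximum and so costs $\tO{ns}$ here.

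Neither of your fallbacks avoids this cost. A coordinator-maintained running estimate from earlier steps does not help, because the losses are adversarial and $L_i(t)$ can change arbitrarily between steps. Lowering a common threshold in rounds, without per-expert stop messages, lets every expert with large $L_i(t)$ generate up to $s$ reports.

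The idea you are missing is an oblivious, global random level. On a sampled step, public randomness picks an integer $a$ with probability $\varrho/2^a$. All servers then report, for all experts at once, the values $q^{(b)}_i(j,t)\ge s^{1/p}/(100\cdot 2^{a/p}\log(nsT))$. Since $\ell_i(j,t)\le 1$, the expected number of servers above this threshold is $\O{2^a\log^p(nsT)}$ regardless of the loss profile. The selection probability $\varrho/2^a$ cancels the $2^a$, leaving $\O{\varrho\log^p(nsT)}$ reports per expert, step and level, plus $\O{s}$ synchronization per sampled step. The coordinator uses expert $i$'s estimate only if its own level $a^*$ (the least $a^*$ with $\widehat{s_i(t)}\ge (s/2^{a^*})^{1/p}$) satisfies $a^*\le a$. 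It then reweights by the inverse selection probability $2^{a^*}/\varrho$.

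This level reweighting is also where the factor $\max(s^{1-2/p},1)$ comes from, and your account of it does not hold up. Max-stability together with the geometric-mean estimator (\lemref{lem:geo:exp:var}) gives constant relative variance for every $p$, so nothing changes on the estimation side at $p=2$. Your server subsampling for $p>2$ also fails as stated. Sampling servers at rate $q$ gives relative variance of order $1/q$ when a single server carries most of $\sum_j\ell_i(j,t)^p$, and the H\"older bound $\|\cdot\|_2\le s^{1/2-1/p}\|\cdot\|_p$ does not control that.

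In the paper the factor is the cost of the reweighting. The second moment is about $\frac{2^{a^*}}{\varrho}\left(\frac{s}{2^{a^*}}\right)^{2/p}=\frac{(2^{a^*})^{1-2/p}s^{2/p}}{\varrho}$. For $p>2$ this grows with $a^*$ and is worst at the deepest relevant level, $2^{a^*}\approx s$. Holding it at $s^{2/p}R^2T$ therefore forces $\varrho=\max(s^{1-2/p},1)/(R^2T)$. \lemref{lem:regret:simulate} then gives regret $\O{Rs^{1/p}\sqrt{\log n}}$, and the communication picks up the same factor through $\varrho$.

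Finally, the base protocol you reduce to is not EXP3-style. \algref{alg:tradeoff} samples time steps with probability $\varrho$ and estimates all $n$ experts on each sampled step, i.e., full-information MWU; that is what the $n/R^2$ term pays for. Sampling only a few experts per block would add an $n$-dependent factor to the variance of the importance-weighted estimates and break the claimed regret.
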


Finally, to complement our theoretical results, we conduct a number of empirical evaluations as a simple proof-of-concept. 
These results appear in \secref{sec:exp}. 

\paragraph{Algorithmic and technical novelties.} 
The major challenge is that $\ell_p$ losses are significantly more nuanced than $\ell_1$ losses. 
Observe that since $\ell_1$ loss is additive, then the total loss is the sum of the individual losses. 
Thus, it is natural to sample individual losses with probability proportional to their magnitude. 
For $\ell_p$ losses, it may be possible to perform some more nuanced sampling, but it is not clear how to efficiently perform sampling in distributed models across all of the experts. 
To overcome this barrier, we embed $\ell_p$ losses into an $\ell_\infty$ framework using random exponential scalings, such that the largest scaled contribution approximates the original $\ell_p$ loss. 
However, due to the probability density function of exponential random variables, the resulting distribution has unbounded variance and thus we use a geometric mean of an independent number of random scalings to acquire an unbiased estimator with bounded variance. 
We then interface these estimated losses with a multiplicative weights update (MWU) algorithm for the purpose of online learning. 
Notably, the use of a geometric mean estimator to reduce variance of estimators using exponential random variables is both an algorithmic and a technical novelty for the distributed online learning with experts problem that we believe may have independent interest to other applications. 
In principle, the geometric mean estimator could be potentially replaced by other variance-reduction techniques; the design of such algorithms is an interesting direction for future work.

\subsection{Preliminaries}
\seclab{sec:prelims}
We briefly discuss a number of relevant preliminaries necessary for our algorithms and analysis. 
We first recall the following standard formulation of Chernoff bounds for concentration inequalities. 
\begin{theorem}[Chernoff Bounds]
\thmlab{thm:chernoff}
Let $X_1, X_2, \dots, X_n$ be independent Bernoulli random variables with $\PPr{X_i = 1} = p_i$ and $\PPr{X_i = 0} = 1 - p_i$. 
Define $X = \sum_{i=1}^{n} X_i$ and let $\mu = \mathbb{E}[X] = \sum_{i=1}^{n} p_i$. 
Then for any $\delta>0$, we have $\Pr\left( |X - \mu| \geq \delta \mu \right) \leq 2 \exp\left(-\frac{\delta^2 \mu}{3}\right)$.
\end{theorem}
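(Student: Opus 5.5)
The plan is the standard moment-generating-function argument: bound each tail separately and combine them with a union bound, which accounts for the factor $2$. One caveat first. As written, with $\delta>0$ arbitrary, the inequality is false for large $\delta$. The true upper tail decays only like $\exp(-\Theta(\delta\log\delta)\mu)$. For example, when $\mu=1$ and the $p_i$ are tiny, $X$ is close to Poisson$(1)$, so $\PPr{X\ge 10}\approx 10^{-7}$, while $2e^{-27}\approx 4\cdot 10^{-12}$. So I will prove the statement for $0<\delta\le 1$, which is the standard form and, I expect, the regime in which it is used later. For $\delta>1$ the best one gets along these lines is $\exp(-\delta\mu/3)$ for the upper tail.

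\textbf{Upper tail.} Fix $\lambda>0$ and apply Markov's inequality to $e^{\lambda X}$. By independence and $1+x\le e^x$,
\[\Ex{e^{\lambda X}}=\prod_{i}\left(1+p_i(e^\lambda-1)\right)\le \exp\left(\mu(e^\lambda-1)\right).\]
Choosing $\lambda=\ln(1+\delta)$ gives
\[\PPr{X\ge(1+\delta)\mu}\le\left(\frac{e^\delta}{(1+\delta)^{1+\delta}}\right)^{\mu}=\exp\left(-\mu\left((1+\delta)\ln(1+\delta)-\delta\right)\right).\]
It then suffices to show $f(\delta)=(1+\delta)\ln(1+\delta)-\delta-\delta^2/3\ge 0$ on $[0,1]$. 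We have $f(0)=0$ and $f'(\delta)=\ln(1+\delta)-2\delta/3$. Since $f'(0)=0$ and $f'$ is concave, $f'$ is nonnegative on $[0,1]$ provided $f'(1)=\ln 2-2/3>0$, which holds. Hence $f\ge0$ on $[0,1]$.

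\textbf{Lower tail.} Apply Markov's inequality to $e^{-\lambda X}$. The same computation gives
\[\PPr{X\le(1-\delta)\mu}\le \left(\frac{e^{-\delta}}{(1-\delta)^{1-\delta}}\right)^{\mu}.\]
For $\delta<1$, the inequality $(1-\delta)\ln(1-\delta)\ge-\delta+\delta^2/2$ (from the Taylor series) bounds this by $\exp(-\delta^2\mu/2)\le\exp(-\delta^2\mu/3)$. At $\delta=1$ we have directly $\PPr{X=0}=\prod_i(1-p_i)\le e^{-\mu}$.

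\textbf{Combining.} Adding the two tail bounds gives $2\exp(-\delta^2\mu/3)$. The only step requiring any care is the elementary calculus inequality $f\ge0$ in the upper tail, and that is exactly where the restriction $\delta\le1$ is needed.
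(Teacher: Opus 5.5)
Your proof is correct for $0<\delta\le 1$. It is the standard moment-generating-function argument. The paper gives no proof of its own here; it only recalls the bound as standard, so there is no competing route to compare.

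Your caveat is also right, and it can be made fully finite rather than via a Poisson limit. Take $n=1$, $p_1=0.01$ and $\delta=99$. Then $\PPr{|X-\mu|\ge\delta\mu}=\PPr{X=1}=0.01$, whereas $2\exp(-\delta^2\mu/3)=2e^{-32.67}<10^{-13}$.

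The one thing to revise is your guess about where the bound is used, because the paper relies on exactly the regime you excluded. In \lemref{lem:simple:comm}, \lemref{lem:trade:comm} and \lemref{lem:full:comm}, the mean is only upper bounded, for example by $O(\log^p(nsT))$, or by $\varrho T=1/R^2$ for the number of sampled days, which may be constant. It is compared against a threshold larger by a polylogarithmic factor, so $\delta=\Omega(\log(nsT))$, which is precisely where the stated form is false.

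What those steps actually need is the form you mention for $\delta\ge 1$: $\PPr{X\ge(1+\delta)\mu_H}\le\exp(-\delta\mu_H/3)$ for any $\mu_H\ge\mu$. Your MGF step already gives this, since $\exp(\mu(e^\lambda-1))\le\exp(\mu_H(e^\lambda-1))$ for $\lambda>0$. This form is strong enough for the $1/\poly(nsT)$ failure probabilities claimed in those lemmas.
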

We next define an exponential random variable. 
\begin{definition}[Exponential random variable]
An \textbf{exponential random variable} with rate parameter $\lambda > 0$ is a continuous random variable $X$ with probability density function (PDF) given by:
\[
f_X(x) =
\begin{cases}
\lambda e^{-\lambda x}, & x \geq 0, \\
0, & x < 0.
\end{cases}
\]
%The expected value and variance of $X$ are:
%\[
%\mathbb{E}[X] = \frac{1}{\lambda}, \quad \text{Var}(X) = \frac{1}{\lambda^2}.
%\]
\end{definition}
We show the following max stability property of exponential random variables. 
\begin{restatable}{lemma}{lemmaxstable}[Max stability of exponential random variables]
\lemlab{lem:max:stable}
Let $X_i=\frac{f_i}{e_i^{1/p}}$ for $i\in[n]$, where $e_1,\ldots,e_n$ are independent exponential random variables with rate $1$. 
Then $\max_{i\in[n]}\frac{f_i}{e_i^{1/p}}$ is distributed as $\frac{\|f\|_p}{e^{1/p}}$, where $e$ is an exponential random variable with rate $1$.
\end{restatable}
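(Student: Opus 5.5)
We will compute the cumulative distribution function of $\max_{i\in[n]} X_i$ directly and compare it to that of $\frac{\|f\|_p}{e^{1/p}}$. We will assume $f_i\ge 0$; in the paper's application the $f_i$ are nonnegative losses, and otherwise one may replace $f_i$ by $|f_i|$. Indices with $f_i=0$ give $X_i=0$ almost surely, so they affect neither the maximum, when at least one $f_i>0$, nor $\|f\|_p$. We may therefore discard them. If all $f_i=0$, both sides are identically $0$ and there is nothing to prove.

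Fix $x>0$. Since each $e_i>0$ almost surely, we have
\[
X_i\le x \iff e_i^{1/p}\ge \frac{f_i}{x} \iff e_i\ge \frac{f_i^p}{x^p}.
\]
The tail of a rate-$1$ exponential random variable is $\PPr{e_i\ge u}=e^{-u}$ for $u\ge 0$. Using independence of the $e_i$, we then obtain
\[
\PPr{\max_{i\in[n]} X_i\le x}=\prod_{i\in[n]}\PPr{e_i\ge \tfrac{f_i^p}{x^p}}=\prod_{i\in[n]}\exp\left(-\frac{f_i^p}{x^p}\right)=\exp\left(-\frac{\|f\|_p^p}{x^p}\right).
\]

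The same computation applied to a single variable gives
\[
\PPr{\frac{\|f\|_p}{e^{1/p}}\le x}=\PPr{e\ge \frac{\|f\|_p^p}{x^p}}=\exp\left(-\frac{\|f\|_p^p}{x^p}\right).
\]
For $x\le 0$, both probabilities are $0$, since all quantities involved are positive almost surely. The two CDFs therefore agree everywhere, and the two random variables have the same distribution.

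There is no real obstacle in this argument. The only care needed is handling zero entries and the sign of the $f_i$, together with noting that the events $\{e_i=0\}$ have probability zero, so that the divisions are well defined almost surely.
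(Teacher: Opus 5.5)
Your argument is correct and is essentially the paper's: compute $\PPr{\max_i X_i\le x}$ as a product of exponential tail probabilities via independence and match it to the CDF of $\|f\|_p/e^{1/p}$. Your formula $\exp(-\|f\|_p^p/x^p)$ is the right one; the paper's displayed $\exp(-t^p\|f\|_p^p)$ is a slip, though it appears identically on both sides. One small caveat is that the assumption $f_i\ge 0$ is genuinely needed, not just convenient: for $n=1$, $f_1=-1$ the maximum is negative, so replacing $f_i$ by $|f_i|$ proves a different statement rather than extending this one.
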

\begin{proof}
Let $X=\max_{i\in[n]}\frac{f_i}{e_i^{1/p}}$. 
Then for any $t>0$, we have 
\begin{align*}
\PPr{X\le t}&=\PPr{X^p\le t^p}\\
&=\prod_{i=1}^n\PPr{\frac{f_i^p}{e_i}\le t^p}\\
&=\prod_{i=1}^n\exp\left(-t^p\cdot f_i^p\right)\\
&=\exp\left(-t^p\cdot\|f\|_p^p\right).
\end{align*}
By comparison, if $Y=\frac{\|f\|_p}{e^{1/p}}$, where $e$ is an exponential random variable with rate $1$, then
\[\PPr{Y\le t}=\PPr{Y^p\le t^p}=\exp\left(-t^p\cdot\|f\|_p^p\right).\]
Hence, $\max_{i\in[n]}\frac{f_i}{e_i^{1/p}}$ follows the same distribution as $\frac{\|f\|_p}{e^{1/p}}$. 
\end{proof}

Finally, we recall the multiplicative weights update (MWU) algorithm. 
For $\eta$ roughly $\frac{1}{\sqrt{T}}$, the following guarantees are known on the MWU algorithm in \algref{alg:mwu}:
\begin{theorem}
\thmlab{thm:mwu}
For a set of $n$ experts with the second moment of the loss bounded by at most $\rho$ on each of $T$ rounds, the multiplicative weights update (MWU) algorithm achieves expected regret $\O{\sqrt{\frac{\rho\log n}{T}}}$. 
\end{theorem}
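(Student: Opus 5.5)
This is the standard regret bound for MWU with stochastic (unbiased) loss estimates. The plan is the usual potential-function argument, with the second moment $\rho$ replacing the usual bound on the losses. Let $\hat L_i(t)\ge 0$ denote the loss estimate fed to MWU at round $t$. Assume it satisfies $\Ex{\hat L_i(t)\mid\mathcal{F}_{t-1}}=L_i(t)$ and $\Ex{\hat L_i(t)^2\mid\mathcal{F}_{t-1}}\le\rho$, where $\mathcal{F}_{t-1}$ is the history before round $t$. Set $w_i(1)=1$, $w_i(t+1)=w_i(t)e^{-\eta\hat L_i(t)}$, $W_t=\sum_i w_i(t)$ and $p_i(t)=w_i(t)/W_t$. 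The algorithm plays $i_t\sim p(t)$, so its expected loss at round $t$ is $\sum_i p_i(t)L_i(t)$.

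\emph{Step 1 (upper bound on the potential).} Use $e^{-x}\le 1-x+x^2/2$, valid for $x\ge 0$, together with $\ln(1+y)\le y$. This gives
\[\ln\frac{W_{t+1}}{W_t}=\ln\sum_i p_i(t)e^{-\eta\hat L_i(t)}\le -\eta\sum_i p_i(t)\hat L_i(t)+\frac{\eta^2}{2}\sum_i p_i(t)\hat L_i(t)^2.\]
Summing over $t$ bounds $\ln(W_{T+1}/W_1)$ from above.

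\emph{Step 2 (lower bound on the potential).} We have $\ln(W_{T+1}/W_1)\ge -\eta\sum_t\hat L_{i^*}(t)-\ln n$.

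\emph{Step 3 (combine and take expectations).} Combining the two bounds and dividing by $\eta$ gives
\[\sum_t\sum_i p_i(t)\hat L_i(t)-\sum_t\hat L_{i^*}(t)\le\frac{\ln n}{\eta}+\frac{\eta}{2}\sum_t\sum_ip_i(t)\hat L_i(t)^2.\]
Now take expectations. Since $p(t)$ is $\mathcal{F}_{t-1}$-measurable, the tower rule turns $\Ex{\sum_i p_i(t)\hat L_i(t)}$ into the expected algorithm loss, and turns $\Ex{\hat L_{i^*}(t)}$ into $L_{i^*}(t)$ for the fixed best expert $i^*$. The same conditioning bounds each second-moment term by $\rho$. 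The total expected regret is therefore at most $\frac{\ln n}{\eta}+\frac{\eta\rho T}{2}$.

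\emph{Step 4 (choose $\eta$).} Take $\eta=\sqrt{2\ln n/(\rho T)}$, which is of order $1/\sqrt{T}$ for constant $\rho$. The cumulative regret becomes $\sqrt{2\rho T\ln n}$. Dividing by $T$ gives $\O{\sqrt{\rho\log n/T}}$.

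The step that needs the most care is Step 1. The inequality $e^{-x}\le1-x+x^2/2$ requires nonnegative estimates. The estimators built later from exponential scalings and geometric means are nonnegative, so this holds. If some estimates were signed, I would instead restrict to $\eta|\hat L|\le 1$ via truncation and use $e^{-x}\le 1-x+x^2$, at the cost of a constant factor. A second point is that $i^*$ must be fixed in advance (oblivious losses) so that $\Ex{\hat L_{i^*}(t)}=L_{i^*}(t)$. For an adaptively defined best expert one would compare against every fixed $i$, and the bound still holds because Step 2 holds for each $i$ simultaneously.
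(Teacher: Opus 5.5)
Your proof is correct, but it is not the paper's route. The paper never proves this theorem: it quotes it as known, and its nearest argument is the proof of \lemref{lem:regret:simulate}, which works with the expected potential multiplicatively. From $\Ex{W_{t+1}\mid\mathcal{F}_t}\le W_t(1-\eta\,p_t\cdot L_t+\eta^2\rho)$ it iterates to a bound on $\Ex{W_{T+1}}$, then finishes with Jensen's inequality $e^{-\eta\sum_t L_{i^*}(t)}\le\Ex{e^{-\eta\sum_t\widehat{s_{i^*}(t)}}}$. You instead take logarithms pathwise and obtain an inequality linear in $\hat L_i(t)$ and $\hat L_i(t)^2$. Only then do you take expectations, using that $p(t)$ is $\mathcal{F}_{t-1}$-measurable. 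Your order of operations is the more robust one. The paper's step ``iterating and taking expectations'' reaches $\Ex{W_{T+1}}\le n\prod_t(1-\eta\,\Ex{p_t\cdot L_t}+\eta^2\rho)$ by tacitly replacing $\Ex{W_t\,(p_t\cdot L_t)}$ with $\Ex{W_t}\,\Ex{p_t\cdot L_t}$. That is not justified, since $W_t$ and $p_t$ are dependent. Making that route rigorous requires noting that $W_t\exp\bigl(\eta\sum_{s<t}p_s\cdot L_s-\eta^2\rho(t-1)\bigr)$ is a supermartingale. In return, the repaired multiplicative route controls an exponential moment of $\sum_t p_t\cdot L_t-\sum_t\widehat{s_{i^*}(t)}$, not just its mean, which is more than the theorem asks for. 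Your linear argument is the simpler way to get exactly the stated expectation bound, with a slightly better constant from $e^{-x}\le1-x+x^2/2$. One minor caveat concerns your closing remark on an adaptively defined best expert. If the losses themselves are chosen adaptively, comparing against every fixed $i$ only gives a pseudo-regret bound, with the maximum over $i$ outside the expectation. For the fixed loss sequences the paper considers, $i^*$ is deterministic and your Step 3 applies verbatim.
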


\begin{algorithm}[!htb]
\caption{Multiplicative weights update algorithm}
\alglab{alg:mwu}
\begin{algorithmic}[1]
\REQUIRE{Learning rate $\eta$, losses $\{\ell_i(t)\}$ for all times $t\in[T]$, experts $i\in[n]$}
\ENSURE{Sequence of experts to play on each day}
\STATE{$w_i\gets 0$ for all $i\in[n]$}
\FOR{each time $t\in[T]$}
\FOR{each $i\in[n]$}
\STATE{$w_i\gets w_i+\ell_i(t)$}
\ENDFOR
\STATE{Sample $i\in[n]$ with probability proportional to $\exp(-\eta w_i)$}
\ENDFOR
\end{algorithmic}
\end{algorithm}

\section{Related Works}
In this section, we provide a brief outline of previous literature on the learning with experts problem. 
In \secref{app:related:classic}, we first describe a number of significant contributions to the problem in the central setting, where all $n$ experts and their predictions are available to an algorithm, and there are no restrictions on the memory bounds of the algorithm. 
Since all information is central, there are also no communication restrictions, unlike the setting we study. 
We then outline previous works for the learning with experts problem in big data models in \secref{app:related:bigdata}. 

\subsection{Classic Learning with Experts}
\seclab{app:related:classic}
The problem of online learning with experts has a rich history with extensive connections to reinforcement learning and optimization. 
Early attempts to understand the problem first assumed the existence of a ``perfect expert'' – one whose predictions are always correct. 
A folklore ``halving'' algorithm achieves at most $\log_2 n$ mistakes given $n$ experts in this setting by retaining the set of experts who have not yet made a mistake and predicts based on a majority vote, eliminating all disagreeing experts for each incorrect prediction. 

\paragraph{Randomized weighted majority.} 
However, the assumption of a perfect expert is often unrealistic. 
The deterministic weighted majority algorithm relaxes this by assigning weights to each expert, predicting based on the weighted majority, and reducing the weights of incorrect experts multiplicatively by $(1-\eps)$ for fixed any parameter $\eps\in(0,1)$. 
Quantitatively, the number of mistakes by the deterministic weighted majority algorithm is at most $2(1+\eps)m^*+\O{\frac{\log n}{\eps}}$, where $m^*$ is the number of mistakes made by the best expert~\cite{LittlestoneW94}. 
In fact, a simple example with two experts who always pick different outcomes at each time can force any deterministic algorithm to be incorrect on every time, while each expert is only incorrect on half of the times. 
Hence, no deterministic algorithm can do better than a multiplicative-factor of two compared to the best expert, i.e., no deterministic algorithm can achieve sublinear regret. 

To overcome the worst-case limitations of deterministic algorithms, Littlestone and Warmuth introduced the randomized weighted majority~\cite{LittlestoneW94}. 
Instead of predicting based on a strict weighted majority, randomized weighted majority predicts according to a probability distribution over the experts, where the probability of choosing an expert is proportional to their weight. 
When an expert makes a mistake, their weight is again decreased multiplicatively by $(1-\eps)$, so that the algorithm makes a total of $(1+\eps)m^*+\O{\frac{\log n}{\eps}}$ mistakes, which translates to $\O{\sqrt{T\log n}}$ regret for $\eps=\Theta\left(\sqrt{\frac{\log n}{T}}\right)$ and is known to be asymptotically optimal~\cite{cover1966behavior}. 

\paragraph{Other randomized variants.}
Besides these fundamental approaches, a variety of other algorithms have been developed, each with separate benefits. 
For example, the well-known Multiplicative Weights Update (MWU) framework generalizes the weight update rule and can accommodate various loss functions~\cite{brown1951iterative}. 
Notably, MWU has applications in boosting algorithms like AdaBoost~\cite{freund1997decision} and approximately solving zero-sum games~\cite{freund1999adaptive}, and it can also efficiently approximate a wide class of linear and semi-definite programs~\cite{Clarkson95}, leading to fast approximations for various NP-complete problems such as the traveling salesperson problem, scheduling problems, and multi-commodity flow~\cite{PlotkinST95,GargK07}.

Building upon the ``Follow the Leader'' principle, which na\"{i}vely selects the expert with the lowest cumulative loss so far, Follow the Perturbed Leader (FTPL) introduces an element of randomness by adding perturbations to the cumulative losses of experts before making a selection~\cite{KalaiV05}. 
This randomization proves beneficial in avoiding worst-case scenarios often encountered by deterministic algorithms, leading to asymptotically optimal regret bounds similar to randomized weighted majority and MWU. 
Moreover, for some structured problems, FTPL is more computationally efficient than MWU. 
In contrast, Follow the Regularized Leader (FTRL) incorporates a regularization term into the decision process~\cite{McMahan11a}, which penalizes drastic changes, promoting stability against noise, and offers flexibility to incorporate prior knowledge by choosing the regularization function.

\subsection{Learning with Experts in Big Data Settings}
\seclab{app:related:bigdata}
We remark that all of the algorithms discussed in \secref{app:related:classic} require storing the performance of all experts across all times throughout the duration of the algorithm. 
However, in many real-world applications, the number of experts $n$ and the time horizon $T$ can be substantial, so that storing and processing all expert performances can be computationally demanding, either requiring significant memory or communication. 
As a result, a line of recent work has focused on developing online learning with experts algorithms that operate with memory usage that is sublinear in the number of experts $n$.

\paragraph{Learning with experts on data streams.}
\cite{SrinivasWXZ22} showed that any algorithm using $S$ total space must achieve regret $\tO{\sqrt{\frac{nT}{S}}}$, implying that in order to achieve the optimal $\O{\sqrt{T\log n}}$ regret, the space used must be roughly linear, i.e., $S=\tO{n}$. 
\cite{SrinivasWXZ22} also initially proposed an algorithm for online learning with experts in the random-order model, achieving $\tO{\frac{nT}{R}}$ space for regret $R$. 
Subsequent work by \cite{PengZ23,PengR23} refined these bounds, culminating in \cite{PengR23}'s algorithm with $\tO{\sqrt{nT}}$ regret using polylogarithmic space in the arbitrary-order model.
%\samson{Maybe expand and talk about techniques}

\paragraph{Multi-armed bandits.} 
Space complexity has also been studied for multi-armed bandits, a related reinforcement learning problem with arms having fixed reward distributions, but only feedback for a single arm is given on each time. 
\cite{LiauSPY18} showed that achieving near-optimal regret requires only constant space. 
However, this approach is not immediately applicable to the our setting, due to the adversarial nature of expert predictions, unlike the static rewards in bandits, and because expert algorithms observe all ``arms'' each day. 
Thus, expert and bandit algorithms are not directly comparable. 
\cite{AssadiW20} showed that $\Omega(k)$ space is necessary to identify the top $k$ arms in a streaming bandit model. 
These results indicate that the experts problem is inherently harder than bandits, as low-regret bandit solutions can have constant space complexity.

\paragraph{Learning in streams.} 
There is also significant research on the tradeoffs between space and sample complexity for statistical learning and estimation in the i.i.d. streaming model, including studies on matrix row inference \cite{Raz17,GargRT18,GargRT19} and parity learning \cite{KolRT17,Raz19}. 
Another line of active work studies specific streaming learning problems such as correlation finding \cite{DaganS18}, collision probability estimation, graph connectivity, and rank estimation \cite{CrouchMVW16}. 
However, our work on the experts problem does not assume any data distribution and focuses on prediction rather than inference. 
Thus, these works differs significantly from our goal of proving general space complexity bounds for expert algorithms while maintaining competitive regret against the best expert.

\section{Warm-Up: Simple Algorithm}
\seclab{sec:simple}
We present a simple distributed algorithm that achieves near-optimal regret, using at most $\O{sT}+nT\cdot\polylog(nsT)$ total bits of communication. 
The intuition for the algorithm is as follows. 
Suppose for all experts $i\in[n]$ and times $t\in[T]$, we have the loss of $i$ at time $t$,  $L_i(t)=\left(\sum_{j\in[s]}\ell_{i}(j,t)^p\right)^{1/p}$. 
Then by setting $w_i(t)=\sum_{t'<t} L_i(t)$, we can run the standard Multiplicative Weights Update (MWU) framework with weights $w_i(t)$ by playing expert $i$ with probability proportional to $\exp(-\eta w_i(t))$ for a fixed learning rate $\eta>0$. 
Unfortunately, we do not have the losses $\{L_i(t)\}$. 
Instead, for time $t\in[T]$, each expert $i\in[n]$, and each server $j\in[s]$, we generate an exponential random variable $e_i(j,t)$. 
By the max-stability property of exponential random variables, c.f., \lemref{lem:max:stable}, we have 
\[\max_{j\in[s]}\frac{\ell_i(j,t)}{(e_i(j,t))^{1/p}}\sim\frac{\left(\sum_{j\in[s]}\ell_i(j,t)^p\right)^{1/p}}{e^{1/p}},\]
where $e$ is another exponential random variable. 
Note that the right-hand side is exactly the $\ell_p$ loss of $i$ on time $t$. 
Moreover, we can compute the expected value of $\frac{1}{e^{1/p}}$, so that if the servers can simply send the maximum of the scaled losses $\frac{\ell_i(j,t)}{(e_i(j,t))^{1/p}}$, then we can compute an unbiased estimate to the $\ell_p$ loss $\left(\sum_{j\in[s]}\ell_i(j,t)^p\right)^{1/p}$. 

Unfortunately, there are two challenges with this approach. 
First, the servers do not know how to send the maximum. 
Second, the variance of the resulting random variable is unbounded. 
To handle the first issue, we first show that with high probability, the maximum is above some threshold that is roughly $s^{1/p}$ and moreover, there is only a small number of scaled losses that are above this threshold. 
Hence, we can upper bound the total amount of communication. 
To address the second issue, we use a standard estimator~\cite{Li08} that takes the geometric mean of a constant number of these estimators, which lowers the variance to a small amount. 
We again remark that this pipeline serves as our main algorithmic contribution over previous works such as \cite{jia2025communication}; the algorithm appears in full in \algref{alg:simple}, where $\widehat{s_i(t)}$ is the geometric mean estimator for the loss $L_i(t)$ of expert $i$ at time $t$ and $w_i(t)$ is the resulting weight for expert $i$ input to MWU. 

\begin{algorithm}[!htb]
\caption{Distributed protocol with near-optimal regret}
\alglab{alg:simple}
\begin{algorithmic}[1]
\REQUIRE{Losses $\{\ell_i(j,t)\}$ for all times $t\in[T]$, experts $i\in[n]$, servers $j\in[s]$, $L_p$ loss parameter $p$}
\ENSURE{Sequence of experts to play on each day}
\STATE{$B\gets\left\lceil\frac{3}{p}\right\rceil$}
\FOR{each time $t\in[T]$}
\FOR{each $i\in[n]$}
\FOR{each server $j\in[s]$}
\STATE{Let $\ell_i(j,t)$ be the loss of expert $i$ on server $j$ at time $t$}
\STATE{Let $e^{(b)}_i(j,t)$ be independent exponential random variables for all $b\in[B]$}
\STATE{$q^{(b)}_i(j,t)\gets\frac{\ell_i(j,t)}{C_{\ref{lem:geo:exp:var}}(e^{(b)}_i(j,t))^{1/p}}$ for $b\in[B]$}
\COMMENT{\lemref{lem:geo:exp:var}}
\IF{$q^{(b)}_i(j,t)\ge\frac{s^{1/p}}{100\log(nsT)}$}
\STATE{Send $q^{(b)}_i(j,t)$ to coordinator}
\ENDIF
\ENDFOR
\ENDFOR
\FOR{each $i\in[n]$}
\STATE{$\widehat{s_i(t)}\gets\prod_{b\in[B]}\left(\max_{j\in[s]}q^{(b)}_i(j,t)\right)^{1/B}$}
\STATE{$w_i(t)\gets w_i(t-1)+\widehat{s_i(t)}$}
\ENDFOR
\STATE{Play MWU on $\{w_i(t)\}$}
\ENDFOR
\end{algorithmic}
\end{algorithm}
We first recall the following fact about probability density functions for polynomials of random variables. 
\begin{fact}
\factlab{fact:pdfs}
Suppose a random variable $X$ has probability density function $f(x)$, with support on the non-negative reals. 
Then the random variable $X^{-1/p}$ has probability density function $px^{-p-1}\cdot f\left(x^{-p}\right)$. 
\end{fact}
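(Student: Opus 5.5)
The statement is \factref{fact:pdfs}: if $X$ has density $f$ on the non-negative reals, then $X^{-1/p}$ has density $px^{-p-1}f(x^{-p})$. The plan is a direct change of variables, carried out through the cumulative distribution function so that no smoothness beyond integrability of $f$ is needed.

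Set $Y=X^{-1/p}$. Since $X\ge 0$ almost surely and $\PPr{X=0}=0$ (as $X$ has a density), $Y$ is almost surely a well-defined positive real. The map $g(u)=u^{-1/p}$ is strictly decreasing on $(0,\infty)$, with inverse $u=y^{-p}$. Hence for every $y>0$,
\[\PPr{Y\le y}=\PPr{X^{-1/p}\le y}=\PPr{X\ge y^{-p}}=\int_{y^{-p}}^{\infty}f(u)\,du.\]

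Next we differentiate in $y$. By the fundamental theorem of calculus and the chain rule, at every $y$ for which $y^{-p}$ is a Lebesgue point of $f$ (hence for almost every $y$),
\[\frac{d}{dy}\PPr{Y\le y}=-f\left(y^{-p}\right)\cdot\frac{d}{dy}y^{-p}=-f\left(y^{-p}\right)\cdot\left(-py^{-p-1}\right)=py^{-p-1}f\left(y^{-p}\right).\]
For $y\le 0$ the CDF is identically zero, so the density vanishes there. This is exactly the claimed formula.

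To be rigorous without differentiating almost everywhere, one can instead substitute $u=x^{-p}$, so that $du=-px^{-p-1}\,dx$, directly in the integral. This gives
\[\int_{y^{-p}}^{\infty}f(u)\,du=\int_0^{y}px^{-p-1}f\left(x^{-p}\right)dx,\]
which shows outright that $px^{-p-1}f(x^{-p})$ integrates to the CDF of $Y$ and is therefore its density.

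The argument has no real obstacle. The only points that need care are the reversal of the inequality caused by the decreasing map, which is what cancels the minus sign from the chain rule, and the null event $X=0$, which does not affect the density.
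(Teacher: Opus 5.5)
Your proof is correct. The CDF identity $\PPr{Y\le y}=\PPr{X\ge y^{-p}}$ together with the substitution $u=x^{-p}$ yields exactly the stated density, for $p>0$, which is the regime the paper uses. The paper gives no proof of this fact; it simply recalls it as standard, so your change-of-variables argument supplies the routine justification it omits.
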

We now compute the expectation and upper bound the variance of our geometric mean estimator.  
\begin{restatable}{lemma}{lemgeoexpvar}[Expectation and variance of geometric mean estimator]
\lemlab{lem:geo:exp:var}
\label{lem:geo:exp:var} %For variable
Let $p>0$ be fixed and $R\ge 3p$ be an integer. 
Let $e_1,\ldots,e_B$ be independent exponential random variables with rate $1$. 
Let $Z_b=\frac{1}{e_b^{1/p}}$ for all $b\in[B]$, and let $Z=\left(\prod_{b\in[B]}Z_b\right)^{1/B}$ be their geometric mean. 
Then there exists a universal constant $C_{\ref{lem:geo:exp:var}}\in(0,2^B]$ such that $\Ex{Z}=C_{\ref{lem:geo:exp:var}}$ and $\Ex{Z^2}\le 3^B$.
\end{restatable}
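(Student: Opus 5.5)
The plan is to compute both moments exactly using the Gamma function, exploiting independence of the $e_b$. I read the hypothesis ``$R\ge 3p$'' as the condition $pB\ge 3$, which is what \algref{alg:simple} enforces through $B=\lceil 3/p\rceil$. Write $Z=\prod_{b\in[B]} e_b^{-1/(pB)}$. By independence,
\[
\Ex{Z}=\prod_{b\in[B]}\Ex{e_b^{-1/(pB)}},\qquad \Ex{Z^2}=\prod_{b\in[B]}\Ex{e_b^{-2/(pB)}}.
\]
So everything reduces to negative fractional moments of a single rate-$1$ exponential random variable $e$.

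For $q<1$, I will compute directly from the density:
\[
\Ex{e^{-q}}=\int_0^\infty x^{-q}e^{-x}\,dx=\Gamma(1-q).
\]
This integral converges at $0$ precisely because $q<1$. Alternatively, one can obtain the same value via \factref{fact:pdfs} applied to $X^{-1/p}$, but the direct integral is simpler. Applying this with $q=1/(pB)$ and $q=2/(pB)$ gives
\[
\Ex{Z}=\Gamma\!\left(1-\tfrac{1}{pB}\right)^B,\qquad \Ex{Z^2}=\Gamma\!\left(1-\tfrac{2}{pB}\right)^B.
\]
Both are finite since $pB\ge 3$ forces $2/(pB)\le 2/3<1$. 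We then define $C_{\ref{lem:geo:exp:var}}:=\Gamma(1-\tfrac{1}{pB})^B$. This is a positive constant depending only on $p$ and $B$, and $\Ex{Z}=C_{\ref{lem:geo:exp:var}}$ holds by definition.

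It remains to prove the numerical bounds. I will use that $\Gamma$ is positive and decreasing on $(0,1]$, with $\Gamma(1)=1$; this follows from log-convexity of $\Gamma$ and the location of its minimum near $1.46$. Hence $x\mapsto\Gamma(1-x)$ is increasing on $[0,1)$. Since $1/(pB)\le 1/3$, we get
\[
0<\Gamma\!\left(1-\tfrac{1}{pB}\right)\le\Gamma(2/3)\approx 1.354\le 2,
\]
so $C_{\ref{lem:geo:exp:var}}\in(0,2^B]$. Similarly, since $2/(pB)\le 2/3$,
\[
\Gamma\!\left(1-\tfrac{2}{pB}\right)\le\Gamma(1/3)\approx 2.679\le 3,
\]
which gives $\Ex{Z^2}\le 3^B$.

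The only step needing care is justifying the monotonicity of $\Gamma$ on $(0,1]$ together with the explicit values $\Gamma(1/3)<3$ and $\Gamma(2/3)<2$. If monotonicity is inconvenient, I would bound directly instead: for $q\le 2/3$, split the integral at $x=1$ to get
\[
\int_0^1 x^{-q}\,dx+\int_1^\infty e^{-x}\,dx\le \frac{1}{1-q}+1\le 4.
\]
This crude bound gives only $4^B$, so to obtain the constant $3$ I would fall back on monotonicity and the numerical value $\Gamma(1/3)<3$.
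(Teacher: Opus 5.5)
Your proof is correct, but it takes a different route from the paper. You compute both moments in closed form, $\Ex{Z}=\Gamma(1-\tfrac{1}{pB})^B$ and $\Ex{Z^2}=\Gamma(1-\tfrac{2}{pB})^B$. You then bound them using monotonicity of $\Gamma$ on $(0,1]$ and the numerical values $\Gamma(2/3)\approx 1.354$ and $\Gamma(1/3)\approx 2.679$.

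The paper never evaluates these moments. It uses the tail formula $\Ex{X}=\int_0^\infty\PPr{X>t}\,dt$ together with $\PPr{e_b^{-1/(pB)}>t}=1-\exp(-t^{-pB})\le t^{-pB}$. This gives $\Ex{e^{-q}}\le 1+\int_1^\infty t^{-1/q}\,dt=\frac{1}{1-q}$, so each factor is at most $\tfrac32\le 2$ for $q=1/(pB)$ and at most exactly $3$ for $q=2/(pB)$, using $pB\ge 3$. The paper then argues positivity of $C$ separately, by lower-bounding the density of $Z_b^{1/B}$ on $[1/2,1]$.

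Your approach buys an exact formula for $C$ and the immediate bound $C\ge 1$, so positivity costs nothing. The price is importing special-function facts and numerics. The paper's argument is fully elementary.

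Your fallback split falls short only because it adds the separate bound $\int_1^\infty x^{-q}e^{-x}\,dx\le e^{-1}$ on top of $\frac{1}{1-q}$. The tail-integral argument recovers the constant $3$ with no numerics. Within your own framework you can get the same thing: $\Gamma(1-q)=\Gamma(2-q)/(1-q)\le\frac{1}{1-q}$, since $\Gamma\le 1$ on $[1,2]$ by convexity and $\Gamma(1)=\Gamma(2)=1$. This gives the bounds $2$ and $3$ without quoting $\Gamma(1/3)$ or $\Gamma(2/3)$.

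Your reading of the garbled hypothesis as $pB\ge 3$ is exactly what the paper's proof uses, via $B=\lceil 3/p\rceil$.
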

\begin{proof}
The probability density function for an exponential random variable with rate $1$ is $f(x)=e^{-x}$. 
Thus by \factref{fact:pdfs}, the probability density function for each term $Z_b$ is $px^{-p-1}\cdot e^{-x^{-p}}$ and similarly, the probability density function for each term $Z_b^{1/B}$ in the geometric mean is $p(x)=Bpx^{-Bp-1}\cdot e^{-x^{-Bp}}$. 
Therefore, we have $\PPr{Z_b^{1/B}>t}\le\frac{1}{t^{Bp}}$, so that for $B=\left\lceil\frac{3}{p}\right\rceil$. 
\begin{align*}
\mathbb{E}&\left[Z_b^{1/B}\right]=\int_0^\infty\PPr{Z_b^{1/B}>t}\,dt\\
&=\int_0^1\PPr{Z_b^{1/B}>t}\,dt+\int_1^\infty\PPr{Z_b^{1/B}>t}\,dt\\
&\le1+\int_1^\infty\frac{1}{t^{Bp}}\,dt\\
&\le1+\int_1^\infty\frac{1}{t^3}\,dt\le2.
\end{align*}
Since $\Ex{Z}=\prod_{b=1}^B\Ex{Z_b^{1/B}}$, then we have $\Ex{z}\le 2^B$. 
Moreover, the probability density function for each term $Z_b^{1/B}$ in the geometric mean satisfies $p(x)=Bpx^{-Bp-1}\cdot e^{-x^{-Bp}}>Bp\cdot e^{-2^{Bp}}$ for all $x\in\left[\frac{1}{2},1\right]$. 
\begin{align*}
\Ex{Z_b^{1/B}}&=\int_0^\infty p(t)\cdot t\,dt\\
&>\int_{1/2}^1 p(t)\cdot t\,dt\\
&\ge\frac{1}{2}\cdot\min_{t\in\left[\frac{1}{2},1\right]} p(t)\cdot t\\
&\ge\frac{1}{2}\cdot\min_{t\in\left[\frac{1}{2},1\right]} Bp\cdot e^{-2^{Bp}}\cdot\frac{1}{2}=\Omega(1),
\end{align*}
so that $\Ex{Z_b^{1/B}}=\Omega(1)$ is bounded away from $0$.  
Thus, $\Ex{Z}$ equals some constant $C_{\ref{lem:geo:exp:var}}\in(0,2^B]$. 

By similar reasoning, we have
\begin{align*}
\mathbb{E}&[Z_b^{2/R}]=\int_0^\infty\PPr{Z_b^{2/R}>t}\,dt\\
&\le\int_0^1\PPr{Z_b^{2/R}>t}\,dt+\int_1^\infty\PPr{Z_b^{2/R}>t}\,dt\\
&\le1+\int_1^\infty\frac{1}{t^{Bp/2}}\,dt\\
&\le1+\int_1^\infty\frac{1}{t^{3/2}}\,dt\le3.
\end{align*}
Hence, we have $\Ex{Z^2}=\prod_{b=1}^B\Ex{Z_b^{2/R}}$, so that $\Ex{Z^2}\in[0,3^B]$. 
\end{proof}
We now show a structural property showing how estimates $\widehat{s_i(t)}$ to losses $L_i(t)=\left(\sum_{j\in[s]}\ell_{i}(j,t)^p\right)^{1/p}$ that are roughly unbiased and have small second moment facilitate an implementation of the MWU algorithm with bounded regret. 
\begin{restatable}{lemma}{lemregretsimulate}
\lemlab{lem:regret:simulate}
Suppose $\Ex{\widehat{s_i(t)}}=L_i(t)$ and this random variable has second moment at most $\rho$ for all $i\in[n]$ and $t\in[T]$. 
Suppose the multiplicative weights update (MWU) algorithm is executed with loss sequences $\widehat{s_i(t)}$ instead of $L_i(t)$ for all $i\in[n]$ and $t\in[T]$. 
Then the sequence of resulting choices achieves expected regret $\O{\sqrt{\frac{\rho\log n}{T}}}$ on the sequence of losses $\{L_i(t)\}_{i\in[n], t\in[T]}$.
\end{restatable}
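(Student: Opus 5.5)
The plan is to reduce to the standard MWU regret analysis applied to the estimated losses, and then transfer the guarantee back to the true losses using unbiasedness together with the fact that the randomness used at time $t$ is independent of the distribution played at time $t$.

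Let $p_t$ denote the MWU distribution at time $t$. It is proportional to $\exp(-\eta w_i(t-1))$, where $w_i(t-1)=\sum_{t'<t}\widehat{s_i(t')}$. Let $\mathcal{F}_{t-1}$ be the $\sigma$-algebra generated by all estimates $\widehat{s_i(t')}$ with $t'<t$. Then $p_t$ is $\mathcal{F}_{t-1}$-measurable. The estimates $\widehat{s_i(t)}$ at time $t$ use fresh exponential random variables, so $\Ex{\widehat{s_i(t)}\mid\mathcal{F}_{t-1}}=L_i(t)$ and $\Ex{\widehat{s_i(t)}^2\mid\mathcal{F}_{t-1}}\le\rho$. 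We also assume the losses are fixed in advance by an oblivious adversary.

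\textbf{Step 1: pathwise bound on the estimates.} Apply the usual potential argument for MWU with nonnegative losses, using $e^{-x}\le 1-x+x^2/2$ for $x\ge 0$, to the sequence $\widehat{s_i(t)}$. This yields, for every fixed expert $i^*$ and every realization,
\[\sum_{t}\langle p_t,\widehat{s}(t)\rangle-\sum_t\widehat{s_{i^*}(t)}\le\frac{\log n}{\eta}+\frac{\eta}{2}\sum_t\sum_i p_t(i)\,\widehat{s_i(t)}^2 .\]
Nonnegativity of the estimates is what makes the inequality $e^{-x}\le 1-x+x^2/2$ valid without any bound on the magnitude of the losses. This matters because $\widehat{s_i(t)}$ has no almost-sure upper bound, only a second-moment bound.

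\textbf{Step 2: take expectations.} Use the tower rule with respect to $\mathcal{F}_{t-1}$:
\begin{itemize}
\item $\Ex{\langle p_t,\widehat{s}(t)\rangle}=\Ex{\langle p_t,L(t)\rangle}$, which is exactly the expected loss of the algorithm at time $t$, since the played expert is drawn from $p_t$;
\item $\Ex{\widehat{s_{i^*}(t)}}=L_{i^*}(t)$, where $i^*$ is the best expert in hindsight for the true losses, which is fixed under an oblivious adversary;
\item $\Ex{\sum_i p_t(i)\widehat{s_i(t)}^2}\le\rho$.
\end{itemize}
Summing over $t$, the expected total regret against $\{L_i(t)\}$ is at most $\frac{\log n}{\eta}+\frac{\eta\rho T}{2}$.

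\textbf{Step 3: tune and normalize.} Setting $\eta=\sqrt{\frac{2\log n}{\rho T}}$ gives total regret $\O{\sqrt{\rho T\log n}}$. Dividing by $T$ gives the claimed $\O{\sqrt{\frac{\rho\log n}{T}}}$. Alternatively, \thmref{thm:mwu} can be invoked directly on the estimated losses, with Step 2 supplying the transfer to the true losses.

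\textbf{Main obstacle.} The delicate point is justifying the conditional independence used in Step 2. The weights $p_t$ depend on past estimates, so the proof must argue that $\widehat{s}(t)$ is generated with fresh randomness, independent of $\mathcal{F}_{t-1}$. It must also state the obliviousness assumption on the losses, so that $i^*$ and $L(t)$ are not random. A secondary issue is that the quadratic bound in Step 1 has to hold without an almost-sure upper bound on the estimates; nonnegativity of $\widehat{s_i(t)}$, which holds because the estimates are geometric means of positive quantities, resolves this.
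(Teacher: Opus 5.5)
Your proof is correct, but it handles the randomness differently from the paper. You first prove the deterministic, pathwise MWU inequality for the nonnegative estimates, and only then take expectations. That inequality is linear in $\langle p_t,\widehat{s}(t)\rangle$, $\widehat{s_{i^*}(t)}$ and $\sum_i p_t(i)\widehat{s_i(t)}^2$, so the tower rule applies term by term and linearity alone handles the comparator.

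The paper instead pushes the conditional expectation inside the potential. It proceeds in three steps:
\begin{enumerate}
\item It bounds $\Ex{W_{t+1}\mid\mathcal{F}_t}\le W_t(1-\eta\,p_t\cdot L_t+\eta^2\rho)$.
\item It asserts that $\Ex{W_{T+1}}\le n\prod_t(1-\eta\,\Ex{p_t\cdot L_t}+\eta^2\rho)$.
\item It uses Jensen's inequality, $\Ex{e^{-\eta\sum_t\widehat{s_{i^*}(t)}}}\ge e^{-\eta\sum_t L_{i^*}(t)}$, to pass back to the true losses.
\end{enumerate}
The ``iterating'' step implicitly replaces $\Ex{W_t(1-\eta\,p_t\cdot L_t+\eta^2\rho)}$ by $\Ex{W_t}\cdot(1-\eta\,\Ex{p_t\cdot L_t}+\eta^2\rho)$. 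This needs $W_t$ and $p_t\cdot L_t$ to be nonnegatively correlated. That is not guaranteed, since both are functions of the same past estimates and can be negatively correlated. The step can be repaired: $W_t\exp(\eta\sum_{t'<t}p_{t'}\cdot L_{t'}-\eta^2\rho(t-1))$ is a nonnegative supermartingale, using $(1+u)e^{-u}\le 1$, and one then applies Jensen to its final value.

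So the expected-potential route needs this extra martingale bookkeeping. Your pathwise route, which is the standard EXP3-style analysis, avoids it entirely.

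Both arguments rest on the same strengthening of the stated hypothesis, which you correctly flag. Unbiasedness and the second-moment bound must hold conditionally on the past. This requires fresh randomness at each step and losses chosen obliviously.
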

\begin{proof}
For each $t\in[T]$, let $w_t = (w_{t,1}, \ldots, w_{t,n})$ be the vector of expert weights maintained by MWU.  
Then the probability distribution $p_t$ satisfies $p_t = \frac{w_t}{\sum_{j=1}^n w_{t,j}}$. 
After observing the loss estimates $\widehat{s_i(t)} \ge 0$, the weights are updated as $w_{t+1,i} = w_{t,i} \exp(-\eta \widehat{s_i(t)})$. 
By assumption, we have that $\widehat{s_i(t)}$ is an unbiased estimate of $L_i(t)$ independent of MWU, and thus for every realization of $p_t$ and each $i, t$, we have
\[\Ex{\widehat{s_i(t)} \mid p_t} = L_i(t),\qquad\Ex{\widehat{s_i(t)}^2 \mid p_t} \le \rho.\]
For $y \ge 0$, $e^{-y} \le 1 - y + y^2$, so
\[\Ex{e^{-\eta \widehat{s_i(t)}} \mid p_t}
\le 1 - \eta L_i(t) + \eta^2 \rho.\]
For $W_t = \sum_i w_{t,i}$ and $L_t = (L_1(t), \ldots, L_n(t))$ and summing over $i$ with weights $w_{t,i}$,
\[\Ex{W_{t+1} \mid \mathcal{F}_t} = \sum_i w_{t,i} \Ex{e^{-\eta \widehat{s_i(t)}} \mid p_t} \le W_t (1 - \eta\, p_t \cdot L_t + \eta^2 \rho).\]
Iterating and taking expectations,
\[\Ex{W_{T+1}} \le n \prod_{t=1}^T (1 - \eta\, \Ex{p_t \cdot L_t} + \eta^2 \rho).\]
For any expert $i^*$, $w_{T+1,i^*} = \exp(-\eta \sum_t \widehat{s_{i^*}(t)}) \le W_{T+1}$, hence
\[\Ex{e^{-\eta \sum_t \widehat{s_{i^*}(t)}}}\le n \prod_{t=1}^T (1 - \eta\, \Ex{p_t \cdot L_t} + \eta^2 \rho).\]
Taking logarithms and using $\ln(1 + u) \le u$,
\[\ln \Ex{e^{-\eta \sum_t \widehat{s_{i^*}(t)}}} \le \ln n - \eta \sum_t \Ex{p_t \cdot L_t} + \eta^2 \rho T.\]
By Jensen’s inequality,
\[-\eta \sum_t L_{i^*}(t) = \ln e^{-\eta \sum_t L_{i^*}(t)} \le \ln \Ex{e^{-\eta \sum_t \widehat{s_{i^*}(t)}}}.\]
Combining the two inequalities gives
\[-\eta \sum_t L_{i^*}(t)\le \ln n - \eta \sum_t \Ex{p_t \cdot L_t} + \eta^2 \rho T,\]
which rearranges to
\[\sum_t \Ex{p_t \cdot L_t} - \sum_t L_{i^*}(t)\le \frac{\ln n}{\eta} + \eta \rho T.\]
Thus, the loss of MWU is at most $\frac{\ln n}{\eta} + \eta \rho T$. 
Choosing $\eta = \sqrt{\frac{\ln n}{\rho T}}$ and normalizing by time horizon $T$ gives a regret bound of at most $\O{\sqrt{\frac{\rho \log n}{T}}}$.
%For each $t\in[T]$, let $p(t)\in[0,1]^n$ denote the probability distribution for the MWU algorithm executed with loss sequences $\{\widehat{s_i(t)}\}_{i\in[n], t\in[T]}$. 
%Since $\widehat{s_i(t)}$ is an unbiased estimate of $L_i(t)$, we have $\Ex{\widehat{s_i(t)}\,\mid\,p_t}=L_i(t)$ for all realizations of $p_t$. 
%Let $C_t$ denote the loss at time $t$ on the loss sequences $\widehat{s_i(t)}$ and let $C'_t$ denote the loss at time $t$  on the loss sequences $\{L_i(t)\}_{i\in[n], t\in[T]}$. 
%Then the expected loss at time $t$ is 
%\begin{align*}
%\Ex{C_t\,\mid\,p_t}&=\Ex{\sum_{i\in[n]}\widehat{s_i(t)}\cdot p_i(t)\,\mid\,p_t}\\
%&=\sum_{i\in[n]}p_i(t)\cdot\Ex{\widehat{s_i(t)}\,\mid\,p_t}\\
%&=\sum_{i\in[n]}p_i(t)\cdot L_i(t)\\
%&=\Ex{C'_t\,\mid\,p_t}.
%\end{align*}
%Taking expectations of both sides, we have $\Ex{C_t}=\Ex{C'_t}$. 
%By \thmref{thm:mwu}, the multiplicative weights update (MWU) algorithm achieves expected regret $\O{\sqrt{\frac{\rho\log n}{T}}}$ on the sequence of losses $\min_{i\in[n]}\sum_{t\in[T]} w_i(t)$. 
%Therefore, its total cost is
%\[\Ex{C_t}=\O{\sqrt{\frac{\rho\log n}{T}}}+\min_{i\in[n]}\sum_{t\in[T]} w_i(t).\]
%On the other hand, 
%\begin{align*}
%\Ex{\min_{i\in[n]}\sum_{t\in[T]}\widehat{s_i(t)}}&\le\min_{i\in[n]}\Ex{\sum_{t\in[T]}\widehat{s_i(t)}}\\
%&=\min_{i\in[n]}\Ex{\sum_{t\in[T]}L_i(t)},
%\end{align*}
%which is the best arm with the true losses. 
%Hence, the expected regret of the algorithm on the true losses is $\O{\sqrt{\frac{\rho\log n}{T}}}$. 
\end{proof}
Unfortunately, we cannot immediately apply \lemref{lem:regret:simulate} because we do not have $\Ex{\widehat{s_i(t)}}=L_i(t)$. 
Namely, in the case where $\max_{j\in[s]}q^{(b)}_i(j,t)<\frac{s^{1/p}}{100\log(nsT)}$, then the value is not sent to the coordinator. 
Fortunately, we show this only holds with probability $\frac{1}{\poly(nT)}$ due to the distribution of exponential random variables. 
As a result, the overall regret is only changed by lower-order terms. 

\begin{lemma}
Let $q_i^{(b)}(j,t)$ for $j \in [s]$ be as defined in \algref{alg:simple}, and suppose $L_i(t) \ge s^{1/p}$. 
Then
\[\PPr{\max_{j \in [s]} q_i^{(b)}(j,t) < \frac{100 \log(nsT)}{s^{1/p}}}\le\frac{1}{\poly(nST)}.\]
\end{lemma}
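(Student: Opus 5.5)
The plan is to reduce the event to a single exponential tail using max-stability, and then compute that tail exactly. Fix $i$, $t$ and $b$, and write $\tau=\frac{100\log(nsT)}{s^{1/p}}$ and $C=C_{\ref{lem:geo:exp:var}}$. By the definition in \algref{alg:simple}, $\max_{j\in[s]}q_i^{(b)}(j,t)=\frac1C\max_{j\in[s]}\frac{\ell_i(j,t)}{(e_i^{(b)}(j,t))^{1/p}}$. Here $e_i^{(b)}(1,t),\ldots,e_i^{(b)}(s,t)$ are independent rate-$1$ exponentials. By \lemref{lem:max:stable}, applied with $f=(\ell_i(1,t),\ldots,\ell_i(s,t))$, this maximum has the same distribution as $\frac{L_i(t)}{C\,e^{1/p}}$ for a single rate-$1$ exponential $e$. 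So it suffices to bound $\PPr{\frac{L_i(t)}{C e^{1/p}}<\tau}$.

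Next I rewrite the event as an upper-tail event for $e$:
\[\frac{L_i(t)}{Ce^{1/p}}<\tau\iff e>\left(\frac{L_i(t)}{C\tau}\right)^p.\]
Since $\PPr{e>x}=e^{-x}$ and $x\mapsto e^{-x}$ is decreasing, I can plug in the hypothesis $L_i(t)\ge s^{1/p}$:
\[\PPr{\max_{j\in[s]}q_i^{(b)}(j,t)<\tau}=\exp\left(-\left(\frac{L_i(t)}{C\tau}\right)^p\right)\le\exp\left(-\frac{s^{2}}{(100\,C\log(nsT))^{p}}\right).\]
This is the exact closed form of the probability. The remaining work is only to show that the exponent is at least $c\log(nsT)$ for a suitable constant $c$. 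That would give $\frac{1}{\poly(nsT)}$, and a union bound over $b\in[B]$ (with $B=\lceil 3/p\rceil$ a constant) and, if desired, over all $i,t$ keeps the bound polynomially small.

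The main obstacle is this last step. The exponent $s^2/(100C\log(nsT))^p$ is at least $c\log(nsT)$ exactly when $s^2\ge c\,(100C)^p\log^{p+1}(nsT)$. Here $C\le 2^B$ and $p$ are constants by \lemref{lem:geo:exp:var}, so this is a polylogarithmic lower bound on $s$. I would first try to argue that this is the intended regime of the statement, namely that the number of servers satisfies $s\ge\polylog(nsT)$ with the appropriate power. In that case the displayed bound immediately gives $\exp(-c\log(nsT))=\frac1{\poly(nsT)}$.

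If no such assumption is available, the plan must stop here. The exact formula shows the bound cannot hold for small $s$. For example, with $s=1$ and $L_i(t)=1$ the probability equals $\exp(-(100C\log(nsT))^{-p})$, which is close to $1$. So the statement as worded holds precisely in the regime $s^{2}\gtrsim\log^{p+1}(nsT)$, and the proof would record this condition explicitly as the step on which the whole argument depends.
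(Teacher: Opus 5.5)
There is a gap, but it lies in your diagnosis rather than in your mathematics. Your reduction (max-stability, then the exact exponential tail, then $L_i(t)\ge s^{1/p}$) is the paper's route. You carry it out more carefully than the printed proof does. That proof misstates the law of the maximum as $C e^{1/p}L_i(t)$ instead of $L_i(t)/(Ce^{1/p})$, and its chain of inequalities does not follow as written. Your $s=1$ example is also correct: with the threshold exactly as printed, the claim fails for small $s$.

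The printed threshold $\frac{100\log(nsT)}{s^{1/p}}$ is a transposition of the sending threshold $\frac{s^{1/p}}{100\log(nsT)}$ in \algref{alg:simple}. The paragraph just before the lemma says the goal is to bound the probability that the maximum falls below \emph{that} threshold, so that the coordinator never receives it. The paper's final bound $\exp(-(100\log(nsT))^p/C^p)$ is exactly what that threshold yields, with $C$ the constant of \lemref{lem:geo:exp:var}.

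Adding a hypothesis $s^2\gtrsim\log^{p+1}(nsT)$ is the wrong repair. It is not among the paper's assumptions. Even if granted, the printed threshold lies below the sending threshold once $s^{2/p}>(100\log(nsT))^2$, so the resulting statement says nothing about whether the maximum is transmitted, which is what the later regret argument needs.

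With $\theta=\frac{s^{1/p}}{100\log(nsT)}$, your own computation finishes the proof for every $s$:
\[\PPr{\frac{L_i(t)}{Ce^{1/p}}<\theta}=\exp\left(-\left(\frac{100\,L_i(t)\log(nsT)}{C\,s^{1/p}}\right)^p\right)\le\exp\left(-\left(\frac{100\log(nsT)}{C}\right)^p\right),\]
using $L_i(t)\ge s^{1/p}$. The model has $p\ge 1$, and $C\le 2^B\le 8$ by \lemref{lem:geo:exp:var}, so the base $100\log(nsT)/C$ exceeds $1$. Hence the bound is at most $\exp(-100\log(nsT)/C)\le (nsT)^{-12}$. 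This last step is where $p\ge 1$ is genuinely used: for $p<1$ the exponent is only of order $\log^p(nsT)$, and the bound is not polynomially small.
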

\begin{proof}
By \lemref{lem:max:stable} and \lemref{lem:geo:exp:var}, $\max_{j \in [s]} q_i^{(b)}(j,t)$ is distributed as $C_{\ref{lem:geo:exp:var}}\, e^{1/p} L_i(t)$, where $e$ is an exponential random variable with rate $1$. 
Thus
\begin{align*}
\PPr{\max_{j \in [s]} q_i^{(b)}(j,t) < \frac{100 \log(nsT)}{s^{1/p}}}&=\PPr{C_{\ref{lem:geo:exp:var}}\, e^{1/p} L_i(t) < \frac{100 \log(nsT)}{s^{1/p}}}\\
&=\PPr{e > \left(\frac{C_{\ref{lem:geo:exp:var}}\, s^{1/p}}{100\, L_i(t)\, \log(nsT)}\right)^{p}}.
\end{align*}
Since $L_i(t) \ge s^{1/p}$, we have
\[\PPr{e > \left(\frac{C_{\ref{lem:geo:exp:var}}\, s^{1/p}}{100\, L_i(t)\, \log(nsT)}\right)^{p}}\le\exp\left(-\frac{(100 \log(nsT))^{p}}{C_{\ref{lem:geo:exp:var}}^{p}}\right)\le\frac{1}{\mathrm{poly}(nsT)}.\]
\end{proof}

Without loss of generality, we set the fixed constants $b>a>0$ to be $[1,5]$; our proofs easily generalize to intervals $[a,b]$. 
Using standard concentration inequalities, e.g., \thmref{thm:chernoff}, we now upper bound the expected regret of our simple algorithm and the total communication of our warm-up distributed protocol. 
\begin{restatable}{lemma}{lemsimpleregret}
\lemlab{lem:simple:regret}
Suppose $\ell_i(j,t)\in[1,5]$ for all $t\in[T]$, $i\in[n]$ and $j\in[s]$. 
Then the expected regret of \algref{alg:simple} is at most $\O{s^{1/p}\sqrt{\frac{\log n}{T}}}$. 
\end{restatable}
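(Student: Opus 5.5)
The plan is to reduce to \lemref{lem:regret:simulate}. We treat the thresholding in \algref{alg:simple} as a low-probability failure event. Fix an expert $i$ and a time $t$. For each $b\in[B]$ define the untruncated quantity $M^{(b)}_i(t)=\max_{j\in[s]}q^{(b)}_i(j,t)$, and let $\widetilde{s_i(t)}=\prod_{b\in[B]}(M^{(b)}_i(t))^{1/B}$ be the ideal estimator the coordinator would form if every server sent its scaled loss.

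\textbf{Unbiasedness and second moment of the ideal estimator.} By \lemref{lem:max:stable}, each $\max_{j}\ell_i(j,t)/(e^{(b)}_i(j,t))^{1/p}$ has the same law as $L_i(t)/e_b^{1/p}$. These are independent across $b$, so $\widetilde{s_i(t)}$ has the same law as $\frac{L_i(t)}{C_{\ref{lem:geo:exp:var}}}\,Z$, where $Z$ is the geometric mean from \lemref{lem:geo:exp:var}. That lemma then gives
\[\Ex{\widetilde{s_i(t)}}=L_i(t),\qquad \Ex{\widetilde{s_i(t)}^2}\le \frac{3^B}{C_{\ref{lem:geo:exp:var}}^2}L_i(t)^2.\]
Since $\ell_i(j,t)\in[1,5]$, we have $L_i(t)\in[s^{1/p},5s^{1/p}]$. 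With $B=\lceil 3/p\rceil$ a constant, this yields $\rho=\O{s^{2/p}}$. The ideal estimators are also freshly drawn each round and so are independent of the MWU distribution $p_t$, which is what \lemref{lem:regret:simulate} requires. Applying it gives expected regret $\O{\sqrt{\rho\log n/T}}=\O{s^{1/p}\sqrt{\log n/T}}$ for MWU run on $\widetilde{s}$.

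\textbf{Coupling with the actual algorithm.} The actual estimator $\widehat{s_i(t)}$ differs from $\widetilde{s_i(t)}$ only on the event $\mathcal{E}$ that for some $i,t,b$ no server's value exceeds the threshold $s^{1/p}/(100\log(nsT))$. I would bound this event directly. By max-stability, $M^{(b)}_i(t)$ has the law of $L_i(t)/(C_{\ref{lem:geo:exp:var}}e^{1/p})$. Falling below the threshold requires $e^{1/p}\ge 100\log(nsT)L_i(t)/(C_{\ref{lem:geo:exp:var}}s^{1/p})\ge 100\log(nsT)/C_{\ref{lem:geo:exp:var}}$, which happens with probability $\exp(-\Omega(\log^p(nsT)))\le 1/\poly(nsT)$ since $p\ge1$. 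This is the statement of the preceding lemma, with its inequality orientation corrected. A union bound over $nTB$ triples gives $\PPr{\mathcal{E}}\le 1/\poly(nsT)$.

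Run both processes on the same randomness. Off $\mathcal{E}$ the played experts coincide, so the expected regret changes by at most $\PPr{\mathcal{E}}$ times the maximum possible per-round excess. That excess is at most $\max_i L_i(t)\le 5s^{1/p}$, deterministically, because the regret is measured on the true losses $L_i(t)$. The additive error is therefore at most $5s^{1/p}/\poly(nsT)$, which is dominated by $s^{1/p}\sqrt{\log n/T}$.

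\textbf{Main obstacle.} The main obstacle is the coupling step. The regret of the real algorithm is evaluated on the true losses, so its difference from the ideal run is cleanly bounded by $\PPr{\mathcal{E}}\cdot\O{s^{1/p}}$. But \lemref{lem:regret:simulate} bounds the regret of the ideal run only in expectation, so I would apply it to the ideal run in full and then transfer the bound via the coupling, rather than trying to condition on $\neg\mathcal{E}$, which would bias the estimators. One also has to confirm that the constant $C_{\ref{lem:geo:exp:var}}$ and the constant $B$ fold into the $\O{\cdot}$, which they do for fixed $p$.
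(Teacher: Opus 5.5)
Your proof is correct and follows the paper's route: max-stability and \lemref{lem:geo:exp:var} give an unbiased estimator with second moment $\O{s^{2/p}}$, \lemref{lem:regret:simulate} turns this into the regret bound, and the threshold is handled as a $1/\poly(nsT)$ failure event. Your coupling with the untruncated estimator handles that failure more cleanly than the paper, which feeds a slightly biased estimator into \lemref{lem:regret:simulate} (stated only for exact unbiasedness) and adds $\frac{1}{\poly(nT)}\cdot T$; also, your $\O{s^{2/p}}$ is the second-moment bound actually needed, where the paper's text writes $\O{s^{1/p}}$.
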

\begin{proof}
Since the loss on each server is in the range $[1,5]$, then the loss $L_i(t)$ on each expert on each day is at most $\O{s^{1/p}}$. 
By \lemref{lem:max:stable}, we have that $\widehat{s_i(t)}$ has the same distribution as $\frac{L_i(t)}{e^{1/p}}$ for an exponential random variable $e$. 
By \lemref{lem:geo:exp:var}, $\widehat{s_i(t)}$ is an unbiased estimate of $L_i(t)+\frac{1}{\poly(nT)}$ with second moment $\O{s^{1/p}}$. 
Thus by \lemref{lem:regret:simulate}, the expected regret of \algref{alg:simple} is at most $\O{s^{1/p}\sqrt{\frac{\log n}{T}}}+\frac{1}{\poly(nT)}\cdot T=\O{s^{1/p}\sqrt{\frac{\log n}{T}}}$.
\end{proof}

\begin{restatable}{lemma}{lemsimplecomm}
\lemlab{lem:simple:comm}
Suppose $\ell_i(j,t)\in[1,5]$ for all $t\in[T]$, $i\in[n]$ and $j\in[s]$. 
Then with high probability, the total communication of \algref{alg:simple} is at most $\O{sT}+nT\cdot\polylog(nsT)$ bits. 
\end{restatable}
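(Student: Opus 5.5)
We bound the communication by counting how many scaled losses $q^{(b)}_i(j,t)$ exceed the threshold $\tau:=\frac{s^{1/p}}{100\log(nsT)}$. Each such value is sent once, and after rounding to $\O{\log(nsT)}$ bits of precision each message costs $\polylog(nsT)$ bits. The $\O{sT}$ term pays for the per-round, self-delimiting overhead: in each round the coordinator must signal every server and receive an end-of-round marker, which is $\O{1}$ bits per server per round. It therefore suffices to show that, with high probability, the number of triples $(i,j,b)$ with $q^{(b)}_i(j,t)\ge\tau$ at a fixed time $t$ is at most $n\cdot\polylog(nsT)$, and then to take a union bound over $t\in[T]$.

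Fix $t$, $i$ and $b$. Since $\ell_i(j,t)\le 5$ and $C_{\ref{lem:geo:exp:var}}$ is a constant, the event $q^{(b)}_i(j,t)\ge\tau$ requires
\[
e^{(b)}_i(j,t)\le\left(\frac{500\log(nsT)}{C_{\ref{lem:geo:exp:var}}\,s^{1/p}}\right)^{p}=\frac{c\log^p(nsT)}{s}
\]
for a constant $c$ depending on $p$. For a rate-$1$ exponential, $\PPr{e\le x}=1-e^{-x}\le x$, so each indicator $X_{i,j,b}$ of this event is a Bernoulli variable with success probability at most $\frac{c\log^p(nsT)}{s}$. The variables $X_{i,j,b}$ are independent across $j$, because the exponentials are drawn independently. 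Their sum over $j\in[s]$ therefore has mean $\mu\le c\log^p(nsT)$.

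Apply \thmref{thm:chernoff}. To get a failure probability of $\frac{1}{\poly(nsT)}$ when $\mu$ is small, pad $\mu$ up to $\Theta(\log(nsT))$; this is the usual monotonicity argument for upper tails, or equivalently one can use the form $\PPr{X\ge(1+\delta)\mu'}\le\exp(-\delta\mu'/3)$ with $\mu'\ge\mu$. This shows that, with probability $1-\frac{1}{\poly(nsT)}$, at most $\O{\log^{\max(p,1)}(nsT)}$ servers send a value for the pair $(i,b)$.

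Union bound over all $i\in[n]$, $b\in[B]$ with $B=\O{1}$, and $t\in[T]$. With high probability the total number of messages is then $nT\cdot\O{\log^{\max(p,1)}(nsT)}$, each carrying $\polylog(nsT)$ bits. Adding the $\O{sT}$ round overhead gives the claimed bound $\O{sT}+nT\cdot\polylog(nsT)$.

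The main obstacle is the bit-cost of each message. A value $q^{(b)}_i(j,t)$ can be arbitrarily large when $e$ is tiny, so I would argue that, with probability $1-\frac{1}{\poly(nsT)}$, every exponential drawn lies in $[\frac{1}{\poly(nsT)},\O{\log(nsT)}]$. Under that event every sent value lies in a range of ratio $\poly(nsT)$, and it can be transmitted to $(1\pm\frac{1}{\poly(nsT)})$ relative precision using $\O{\log(nsT)}$ bits. The resulting rounding perturbs the regret only by lower-order terms, consistent with \lemref{lem:simple:regret}.

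A second, subtler point is that for $p$ large, $\log^p(nsT)$ is not polylogarithmic uniformly in $p$. I would treat $p$ as a fixed constant, which the statement implicitly does.
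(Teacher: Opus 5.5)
Your proposal is correct and follows essentially the same route as the paper. You bound the probability that a fixed $q^{(b)}_i(j,t)$ clears the threshold by $\O{\log^p(nsT)/s}$ using $\ell_i(j,t)\le 5$, apply a Chernoff bound over the independent servers for each fixed $(i,b,t)$, take a union bound over $i$, $b$, $t$, and add $\O{sT}$ for per-round synchronization. Your additional care (keeping $C_{\ref{lem:geo:exp:var}}$ in the threshold, padding the mean so Chernoff gives $1/\poly(nsT)$ failure probability when the expected count is small, and bounding the bit length of each message) fills in details the paper leaves implicit without changing the argument.
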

\begin{proof}
Consider \algref{alg:simple}. 
Observe that at time $t\in[T]$, server $j\in[s]$ will communicate an expert $i\in[n]$ only if $q^{(b)}_i(j,t)\ge\frac{s^{1/p}}{100\log(nsT)}$. 
By assumption, we have $\ell_i(j,t)\le 5$ for all $t\in[T]$, $i\in[n]$ and $j\in[s]$. 
Since $q^{(b)}_i(j,t)=\frac{\ell_i(j,t)}{(e^{(b)}_i(j,t))^{1/p}}$ for $b\in[B]$, then for a fixed $b\in[B]$, server $j\in[s]$ will use communication only if $(e^{(b)}_i(j,t))^{1/p}\le\frac{500\log(nsT)}{s^{1/p}}$ or equivalently, if $e^{(b)}_i(j,t)\le\frac{500^p\log^p(nT)}{s}$ 
Since $(e^{(b)}_i(j,t))$ is an independent exponential random variable with rate $1$, then we have
\[\PPr{(e^{(b)}_i(j,t))^{1/p}\le\frac{500\log(nsT)}{s^{1/p}}}\lesssim\frac{\log^p(nT)}{s}.\]
Now for a fixed $i\in[n]$, let $Y_1,\ldots,Y_S$ denote indicator random variables for whether $q^{(b)}_i(j,t)$ triggers a message from the server to the coordinator. 
In other words, $Y_j=1$ if server $j$ sends a message due to $i$ and $Y_j=0$ otherwise. 
Then we have
\[\Ex{Y_1+\ldots+Y_S}\lesssim\log^p(nsT).\]
By standard Chernoff bounds, c.f., \thmref{thm:chernoff}, we have that $\PPr{Y_1+\ldots+Y_s\gtrsim\log^{p+1}(nsT)}\le\frac{1}{(nsT)^{10}}$. 
By a union bound over all $b\in[B]$, $i\in[n]$, $t\in[T]$, it follows that the total number of samples sent to the coordinator is at most $nT\cdot\polylog(nsT)$. 
On the other hand, the coordinator needs to sync with every server for all $t\in[T]$, inducing $\O{sT}$ communication. 
Hence, with high probability, the overall communication is at most $\O{sT}+nT\cdot\polylog(nsT)$. 
\end{proof}

Combining \lemref{lem:simple:regret} and \lemref{lem:simple:comm}, we have \thmref{thm:simple}. 
%\thmsimple*

\section{Communication-Regret Trade-Off}
\seclab{sec:tradeoff}
In this section, we parameterize our warm-up algorithm to achieve a communication-regret trade-off. 
Namely, we show that if the goal is to achieve regret $R$, then there exists a distributed protocol that uses total communication at most $\left(\frac{n+s}{R^2}\right)\cdot\polylog(nsT)$ bits. 

The algorithm is quite similar to \algref{alg:simple}. 
The main difference is that now at each time, each server is sampled to run the previous protocol. 
That is, with probability $\varrho$, each server independently performs the same protocol as before. 
Otherwise, with probability $1-\varrho$, the server does not speak. 
We remark that the coordinator can also know the outcome of these events by using public randomness, so that the coordinator does not need to speak to each server to determine whether the server was sampled. 
The algorithm appears in full in \algref{alg:tradeoff}. 
\begin{algorithm}[!htb]
\caption{Distributed protocol with communication-regret trade-off}
\alglab{alg:tradeoff}
\begin{algorithmic}[1]
\REQUIRE{Target regret $R$, losses $\{\ell_i(j,t)\}$ for all times $t\in[T]$, experts $i\in[n]$, and servers $j\in[s]$, $L_p$ loss parameter $p$}
\ENSURE{Sequence of experts to play on each day}
\STATE{$B\gets\left\lceil\frac{3}{p}\right\rceil$, $\varrho\gets\frac{1}{R^2T}$}
\FOR{each time $t\in[T]$}
\STATE{With probability $1-\varrho$, the servers do not send anything and the weights $\{w_i(t)\}$ are not updated}
\STATE{Otherwise, with probability $\varrho$, do the following:}
\FOR{each $i\in[n]$}
\FOR{each server $j\in[s]$}
\STATE{Let $\ell_i(j,t)$ be the loss of expert $i$ on server $j$ at time $t$}
\STATE{Let $e^{(b)}_i(j,t)$ be independent exponential random variables for all $b\in[B]$}
\STATE{$q^{(b)}_i(j,t)\gets\frac{\ell_i(j,t)}{C_{\ref{lem:geo:exp:var}}(e^{(b)}_i(j,t))^{1/p}}$ for $b\in[B]$}
\COMMENT{\lemref{lem:geo:exp:var}}
\IF{$q^{(b)}_i(j,t)\ge\frac{s^{1/p}}{100\log(nsT)}$}
\STATE{Send $q^{(b)}_i(j,t)$ to coordinator}
\ENDIF
\ENDFOR
\ENDFOR
\FOR{each $i\in[n]$}
\STATE{$\widehat{s_i(t)}\gets\frac{1}{\varrho}\cdot\prod_{b\in[B]}\left(\max_{j\in[s]}q^{(b)}_i(j,t)\right)^{1/B}$}
\STATE{$w_i(t)\gets w_i(t-1)+\widehat{s_i(t)}$}
\ENDFOR
\STATE{Play MWU on $\{w_i(t)\}$}
\ENDFOR
\end{algorithmic}
\end{algorithm}

We first show the following property of exponential random variables. 
\begin{restatable}{fact}{factexpmiddle}
\factlab{fact:exp:middle}
Let $e$ be an exponential random variable with rate $1$. 
Then for all $x>2$, we have $\PPr{\frac{1}{e}\in\left(x,2x\right]}\in\left[\frac{1}{4x},\frac{1}{2x}\right]$. 
\end{restatable}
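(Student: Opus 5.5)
The plan is to compute the probability exactly from the CDF of $e$ and then bound the resulting closed form with elementary inequalities for the exponential function. Since $e>0$ almost surely, the event $\frac{1}{e}\in(x,2x]$ is the same as the event $e\in\left[\frac{1}{2x},\frac{1}{x}\right)$. Because $\PPr{e\ge y}=e^{-y}$ for $y\ge 0$, this gives
\[\PPr{\frac{1}{e}\in(x,2x]}=e^{-1/(2x)}-e^{-1/x}.\]
I will substitute $u=\frac{1}{2x}$. The hypothesis $x>2$ then gives $u\in\left(0,\frac14\right)$, and the quantity becomes $e^{-u}-e^{-2u}=e^{-u}\left(1-e^{-u}\right)$. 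The claim is now equivalent to showing $\frac{u}{2}\le e^{-u}(1-e^{-u})\le u$ for $u\in(0,\frac14)$.

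For the upper bound, I will use $1-e^{-u}\le u$, which follows from convexity of the exponential ($e^{-u}\ge 1-u$), together with $e^{-u}\le 1$. Multiplying gives $e^{-u}(1-e^{-u})\le u=\frac{1}{2x}$.

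For the lower bound, I will use the alternating-series estimate $e^{-u}\le 1-u+\frac{u^2}{2}$ for $u\ge0$. This gives $1-e^{-u}\ge u-\frac{u^2}{2}=u\left(1-\frac{u}{2}\right)\ge\frac{7}{8}u$ when $u<\frac14$. I will also use $e^{-u}\ge e^{-1/4}\ge 1-\frac14=\frac34$. Combining the two,
\[e^{-u}(1-e^{-u})\ge\frac34\cdot\frac78\,u=\frac{21}{32}u\ge\frac{u}{2}=\frac{1}{4x}.\]

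There is no real obstacle here. The only step needing care is choosing the lower-bound constants so that the product stays above $\frac12$, and $\frac{21}{32}$ leaves comfortable slack. I would double-check that the hypothesis $x>2$ is exactly what makes $u<\frac14$, since both constants depend on it.
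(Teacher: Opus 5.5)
Your proof is correct; it bounds the same quantity as the paper but by a different method. The paper passes to the density of $1/e$, namely $y^{-2}e^{-1/y}$. It then sandwiches the factor $e^{-1/y}$ between $\frac12$ and $1$ for $y\in(x,2x]$, which holds because $e^{-1/y}>e^{-1/2}>\frac12$ when $y>2$. Both bounds then reduce to $\int_x^{2x}y^{-2}\,dy=\frac{1}{2x}$.

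You instead evaluate that integral exactly: substituting $v=1/y$ turns the paper's integral into your expression $e^{-1/(2x)}-e^{-1/x}$. You then bound $e^{-u}(1-e^{-u})$ using $1-e^{-u}\le u$ and the second-order Taylor bound $e^{-u}\le 1-u+\frac{u^2}{2}$.

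Your route needs no change-of-variables density, and it gives a slightly sharper lower bound, $\frac{21}{64x}$ versus $\frac{1}{4x}$. The paper's pointwise sandwich is shorter. It also works for any density that is within constant factors of a power law on the interval, without needing a closed-form CDF.
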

\begin{proof}
By \factref{fact:pdfs}, the probability density function for $\frac{1}{e}$ is $p(x)=\frac{1}{x^2}\cdot e^{-1/x}$. 
We have
\begin{align*}
\PPr{\frac{1}{e}\in\left(x,2x\right]}=\int_x^{2x}p(x)\,dx=\int_x^{2x}\frac{1}{x^2}\cdot e^{-1/x}\,dx.
\end{align*}
Note that for $x>2$, we have $e^{1/x}\in\left(\frac{1}{2},1\right]$. 
Hence,
\begin{align*}
\PPr{\frac{1}{e}\in\left(x,2x\right]}\le\int_x^{2x}\frac{1}{x^2}\,dx=\frac{1}{x}-\frac{1}{2x}=\frac{1}{2x},
\end{align*}
and
\begin{align*}
\PPr{\frac{1}{e}\in\left(x,2x\right]}\ge\int_x^{2x}\frac{1}{2}\frac{1}{x^2}\,dx=\frac{1}{2x}-\frac{1}{4x}=\frac{1}{4x}.
\end{align*}
\end{proof}
We now analyze the expectation and upper bound the variance of our communication-regret trade-off protocol, using the properties of exponential random variables. 
\begin{restatable}{lemma}{lemtradeboundexpvar}
\lemlab{lem:trade:bound:exp:var}
We have $\Ex{\widehat{s_i(t)}}=L_i(t)+\frac{1}{\poly(nT)}$ and $\Ex{(\widehat{s_i(t)})^2}\le\O{TR^2}\cdot 3^B\cdot(L_i(t))^2$. 
\end{restatable}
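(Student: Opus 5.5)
We analyze $\widehat{s_i(t)}$ in \algref{alg:tradeoff} by conditioning on the sampling event. Let $\chi_t$ be the indicator that time $t$ is sampled, so $\PPr{\chi_t=1}=\varrho=\frac{1}{R^2T}$, and $\chi_t$ is independent of the exponential variables. Then
\[\widehat{s_i(t)}=\frac{\chi_t}{\varrho}\cdot Y_i(t),\qquad Y_i(t)=\prod_{b\in[B]}\left(\max_{j\in[s]}q^{(b)}_i(j,t)\right)^{1/B},\]
where $Y_i(t)$ is exactly the estimator of \algref{alg:simple}. By independence, $\Ex{\widehat{s_i(t)}}=\Ex{Y_i(t)}$ and $\Ex{\widehat{s_i(t)}^2}=\frac{1}{\varrho}\Ex{Y_i(t)^2}=R^2T\cdot\Ex{Y_i(t)^2}$. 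It therefore suffices to bound the first two moments of $Y_i(t)$, which is the analysis of the warm-up protocol.

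\textbf{First moment.} For each $b$, \lemref{lem:max:stable} shows that $\max_{j}\frac{\ell_i(j,t)}{(e^{(b)}_i(j,t))^{1/p}}$ has the same distribution as $L_i(t)\cdot Z_b$, where $Z_b=e_b^{-1/p}$. The $B$ copies are independent. Ignoring thresholding, $Y_i(t)$ is then distributed as $\frac{L_i(t)}{C_{\ref{lem:geo:exp:var}}}\cdot Z$, with $Z$ the geometric mean from \lemref{lem:geo:exp:var}. Since $\Ex{Z}=C_{\ref{lem:geo:exp:var}}$, we get $\Ex{Y_i(t)}=L_i(t)$.

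Thresholding only matters when some $\max_j q^{(b)}_i(j,t)$ falls below $\frac{s^{1/p}}{100\log(nsT)}$. Using $L_i(t)\ge s^{1/p}$, the preceding lemma shows this event has probability $\frac{1}{\poly(nsT)}$. On that event the coordinator's value differs from the ideal one by at most a bounded amount, since the unsent values lie below the threshold. Cauchy--Schwarz with the second-moment bound gives the correction: at most $\sqrt{\PPr{\text{bad}}}\cdot\O{3^{B/2}L_i(t)}=\frac{1}{\poly(nT)}$. This additive error is what appears in the statement.

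\textbf{Second moment.} $Y_i(t)^2$ is dominated by the untruncated $\frac{L_i(t)^2}{C_{\ref{lem:geo:exp:var}}^2}Z^2$, because thresholding can only replace values by smaller ones (or $0$). By \lemref{lem:geo:exp:var}, $\Ex{Z^2}\le 3^B$, and $C_{\ref{lem:geo:exp:var}}$ is a constant bounded away from $0$. Hence $\Ex{Y_i(t)^2}\le\O{3^B}L_i(t)^2$. Multiplying by $\frac{1}{\varrho}=R^2T$ gives $\Ex{\widehat{s_i(t)}^2}\le\O{TR^2}\cdot3^B\cdot L_i(t)^2$.

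\textbf{Main obstacle.} The delicate step is the bias term: making precise how the coordinator treats the unsent maxima and showing the resulting discrepancy is only $\frac{1}{\poly(nT)}$. The natural convention is to treat missing values as $0$ or as the threshold. I would first try the Cauchy--Schwarz argument above, combining the small failure probability with the bounded second moment.
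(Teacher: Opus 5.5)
Your proposal is correct and follows essentially the same route as the paper. You factor out the independent sampling indicator so that $\Ex{\widehat{s_i(t)}}=\Ex{Y_i(t)}$ and $\Ex{\widehat{s_i(t)}^2}=R^2T\cdot\Ex{Y_i(t)^2}$, then use max-stability and the geometric-mean lemma (normalizing constant $\Theta(1)$, $\Ex{Z^2}\le 3^B$) for both moments. Your Cauchy--Schwarz treatment of the thresholding bias is more careful than the paper, which only remarks informally that the bad event has probability $\frac{1}{\poly(nT)}$. One wording fix: replace ``bounded amount'' by the exact relation $Y_i(t)-Y'_i(t)=Y_i(t)\cdot\mathbf{1}[\text{bad}]$, where $Y'_i(t)$ is the coordinator's value with unsent maxima set to zero. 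The difference is not bounded, since on the bad event the other maxima in the geometric mean can still be arbitrarily large; Cauchy--Schwarz only needs this relation.
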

\begin{proof}
Let $\varrho$ be the probability of sampling each time $t\in[T]$. 
Observe that with probability $1-\varrho$, we have $\widehat{s_i(t)}=0$. 
Otherwise, we continue as before. 

Let $e,e_1,\ldots,e_B$ be independent random variables. 
Then by \lemref{lem:max:stable}, we have
\begin{align*}
\Ex{\widehat{s_i(t)}}&=\varrho\cdot\frac{1}{\varrho}\cdot\Ex{\prod_{b=1}^B\left(\max_{j\in[s]}q^{(b)}_i(j,t)\right)^{1/B}}\\
&=\varrho\cdot\frac{1}{\varrho}\cdot\Ex{\prod_{b=1}^B\left(\frac{L_i(t)}{C_{\ref{lem:geo:exp:var}}e_b^{1/p}}\right)^{1/B}}\\
&=\varrho\cdot\frac{1}{\varrho}\cdot\prod_{b=1}^B\Ex{\left(\frac{L_i(t)}{C_{\ref{lem:geo:exp:var}}e^{1/p}}\right)^{1/B}}\\
&=\frac{L_i(t)}{C_{\ref{lem:geo:exp:var}}}\cdot\left(\Ex{\frac{1}{e^{1/Bp}}}\right)^B\\
&=L_i(t),    
\end{align*}
where the last equality follows by the definition of $C_{\ref{lem:geo:exp:var}}=\left(\Ex{\frac{1}{e^{1/Bp}}}\right)^B$ in \lemref{lem:geo:exp:var}. 

Similarly, we have
\begin{align*}
\Ex{(\widehat{s_i(t)})^2}
&=p\cdot\frac{1}{\varrho^2}\cdot\Ex{\prod_{b=1}^B\left(\frac{(L_i(t))^2}{(C_{\ref{lem:geo:exp:var})^2}e_b^{2/p}}\right)^{1/B}}\\
&=\frac{1}{\varrho}\cdot\prod_{b=1}^B\Ex{\left(\frac{(L_i(t))^2}{(C_{\ref{lem:geo:exp:var})^2}e_b^{2/p}}\right)^{1/B}}\\
&=\frac{1}{\varrho}\cdot\prod_{b=1}^B\cdot\frac{(L_i(t))^2}{(C_{\ref{lem:geo:exp:var})^2}}\cdot\Ex{\frac{1}{e_b^{2/Bp}}}^B.
\end{align*}
By \lemref{lem:geo:exp:var}, we have $C_{\ref{lem:geo:exp:var}}=\Theta(1)$ and  $\Ex{\frac{1}{e_b^{2/Bp}}}^B\le 3^B$. 
Moreover, we set $\varrho=\frac{1}{TR^2}$, so that $\frac{1}{\varrho}=TR^2$. 
Therefore,
\begin{align*}
\Ex{(\widehat{s_i(t)})^2}
\le TR^2\cdot 3^B\cdot\O{1}\cdot(L_i(t))^2,
\end{align*}
as desired. 
\end{proof}
Again, we remark that we do not have the idealized expectation of $\Ex{\widehat{s_i(t)}}=L_i(t)$ because $\max_{j\in[s]}q^{(b)}_i(j,t)$ may not be sent to the coordinator if it is less than $\frac{s^{1/p}}{100\log(nsT)}$, which changes the resulting expectation calculation. 
However, since this event only occurs with probability $\frac{1}{\poly(nT)}$, then the resulting claim follows. 
We then upper bound the regret and the total communication of \algref{alg:tradeoff} as follows: 
\begin{restatable}{lemma}{lemtraderegret}
\lemlab{lem:trade:regret}
Suppose $\ell_i(j,t)\in[1,5]$ for all $t\in[T]$, $i\in[n]$ and $j\in[s]$. 
The expected regret of \algref{alg:tradeoff} is at most $\O{Rs^{1/p}\sqrt{\log n}}$. 
\end{restatable}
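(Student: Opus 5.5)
The proof combines the moment bounds of \lemref{lem:trade:bound:exp:var} with the MWU-with-estimated-losses guarantee of \lemref{lem:regret:simulate}, in the same way that \lemref{lem:simple:regret} follows from \lemref{lem:geo:exp:var}.

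First, I bound the true losses. Since $\ell_i(j,t)\in[1,5]$, we have $L_i(t)=\left(\sum_{j\in[s]}\ell_i(j,t)^p\right)^{1/p}\in[s^{1/p},5s^{1/p}]$, so $(L_i(t))^2=\O{s^{2/p}}$. By \lemref{lem:trade:bound:exp:var}, with $B=\lceil 3/p\rceil$ a constant for fixed $p$, the estimator $\widehat{s_i(t)}$ produced by \algref{alg:tradeoff} has second moment
\[\rho\le\O{TR^2}\cdot 3^B\cdot(L_i(t))^2=\O{TR^2 s^{2/p}}.\]
On the non-sampled rounds the estimator is simply $0$, and the importance weight $1/\varrho$ restores unbiasedness up to the truncation error.

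Next, I handle the bias. \lemref{lem:trade:bound:exp:var} gives $\Ex{\widehat{s_i(t)}}=L_i(t)+\frac{1}{\poly(nT)}$ rather than exact unbiasedness, because the maximum scaled loss may fall below the sending threshold. I apply \lemref{lem:regret:simulate} to the shifted losses $L'_i(t)=\Ex{\widehat{s_i(t)}}$. This is legitimate because the estimator is generated from fresh randomness independent of the MWU distribution $p_t$. The shift changes every expert's cumulative loss, and the algorithm's expected loss, by at most $T/\poly(nT)$. After normalizing by $T$, this is a $1/\poly(nT)$ additive term in the regret, negligible next to the main term.

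Plugging in $\rho=\O{TR^2s^{2/p}}$ then gives
\[\O{\sqrt{\frac{\rho\log n}{T}}}=\O{\sqrt{\frac{TR^2 s^{2/p}\log n}{T}}}=\O{Rs^{1/p}\sqrt{\log n}}.\]
The learning rate is $\eta=\sqrt{\ln n/(\rho T)}$. The hypothesis $R\ge\frac{1}{\sqrt{T}}$ ensures $\varrho=\frac{1}{R^2T}\le 1$, so the sampling probability is valid.

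The step I expect to need the most care is checking that \lemref{lem:regret:simulate} truly applies. Its proof uses $e^{-y}\le 1-y+y^2$ for $y=\eta\widehat{s_i(t)}\ge 0$, which holds for all $y\ge 0$, so the heavy tail of the geometric-mean estimator and the $1/\varrho$ scaling cause no problem. The remaining requirement is the conditional bound $\Ex{\widehat{s_i(t)}^2\mid p_t}\le\rho$. This holds because the sampling coin and the exponentials at time $t$ are independent of the history that determines $p_t$.
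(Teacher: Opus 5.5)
Your proposal is correct and follows the paper's proof: you bound $L_i(t)=\O{s^{1/p}}$, plug the moment bounds of \lemref{lem:trade:bound:exp:var} (which the paper's proof mis-cites as \lemref{lem:full:exp:var}) into \lemref{lem:regret:simulate} with $\rho=\O{TR^2s^{2/p}}$, and absorb the truncation bias as an additive $T/\poly(nT)$ term before normalizing by $T$. Your shifted-loss reduction for the bias, the check that $\varrho\le 1$, and the check that the time-$t$ randomness is independent of $p_t$ make explicit steps that the paper leaves implicit.
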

\begin{proof}
Since the loss on each server is in the range $[1,5]$, then the loss $L_i(t)$ on each expert on each day is at most $\O{s^{1/p}}$. 
By \lemref{lem:full:exp:var}, we have that $\Ex{\widehat{s_i(t)}}=L_i(t)+\frac{1}{\poly(nT)}$ and $\Ex{(\widehat{s_i(t)})^2}=\O{TR^2}\cdot 3^B\cdot(L_i(t))^2$. 
Thus by \lemref{lem:regret:simulate}, the expected regret of \algref{alg:tradeoff} is at most $\O{Rs^{1/p}\sqrt{3^B\log n}}+\frac{1}{\poly(nT)}\cdot T=\O{Rs^{1/p}\sqrt{3^B\log n}}$.
\end{proof}

\begin{restatable}{lemma}{lemtradecomm}
\lemlab{lem:trade:comm}
Suppose $\ell_i(j,t)\in[1,5]$ for all $t\in[T]$, $i\in[n]$ and $j\in[s]$. 
Then with high probability, the total communication is at most $\left(\frac{n+s}{R^2}\right)\cdot\polylog(nsT)$ bits. 
\end{restatable}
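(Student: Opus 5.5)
The plan is to split the communication into two parts. The first part is the communication in each sampled round, which we bound using the per-round analysis already carried out for \lemref{lem:simple:comm}. The second part is the number of sampled rounds, which we control with a Chernoff bound. Whether time $t$ is sampled is decided by public randomness, so the coordinator and the servers all know which rounds are active without exchanging any messages. In unsampled rounds nobody speaks. As a result, no $\O{sT}$ synchronization term appears; synchronization is paid only in sampled rounds.

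\textbf{Step 1: the number of sampled rounds.} Let $X_t$ indicate that time $t$ is sampled, so $\PPr{X_t=1}=\varrho=\frac{1}{R^2T}$ independently across $t$. The expected number of active rounds is $\mu=\varrho T=\frac{1}{R^2}\ge 1$, since $R\le 1$ in the meaningful regime. Applying \thmref{thm:chernoff} with $\delta=\Theta(\log(nsT))$ (or, for small $\mu$, the standard multiplicative tail $\PPr{X\ge k}\le 2^{-k}$ for $k\ge 6\mu$) shows that
\[
\sum_t X_t\le \frac{1}{R^2}\cdot\O{\log(nsT)}
\]
with probability $1-\frac{1}{\poly(nsT)}$.

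\textbf{Step 2: communication within one sampled round.} I would repeat the argument of \lemref{lem:simple:comm} verbatim. Since $\ell_i(j,t)\le 5$, server $j$ sends $q^{(b)}_i(j,t)$ only if $e^{(b)}_i(j,t)\lesssim\frac{\log^p(nsT)}{s}$ (absorbing the constant $C_{\ref{lem:geo:exp:var}}$). This event has probability $\O{\frac{\log^p(nsT)}{s}}$. For each fixed $i$ and $b$, the expected number of senders across the $s$ servers is therefore $\O{\log^p(nsT)}$. Chernoff plus a union bound over $i\in[n]$, $b\in[B]$, and $t\in[T]$ gives at most $\polylog(nsT)$ messages per pair $(i,b)$, each of $\O{\log(nsT)}$ bits once values are rounded to a $\frac{1}{\poly(nsT)}$-grid. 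Adding $\O{s}$ bits for the coordinator to sync with each server, a sampled round costs $\O{s}+n\cdot\polylog(nsT)$ bits with high probability.

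\textbf{Step 3: combining.} Multiplying the two bounds gives total communication
\[
\frac{\O{\log(nsT)}}{R^2}\cdot\left(\O{s}+n\polylog(nsT)\right)=\left(\frac{n+s}{R^2}\right)\polylog(nsT)
\]
with high probability. This uses a union bound over the two failure events. Conditioning on which rounds are sampled does not affect the exponential variables, because they are independent of the sampling coins.

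\textbf{Main obstacle.} The delicate step is the per-round bound. We need the threshold condition to make each server's message count depend only on $\log^p(nsT)/s$ and not on the actual loss values. We also need the union bound to cover all $T$ potential rounds rather than just the sampled ones, which is harmless since the bound only costs $\log T$ factors. Finally, the per-value bit complexity must be bounded by discretizing $q^{(b)}_i(j,t)$, since the exact real value cannot be sent. The Chernoff step for the sampled rounds in the case $1/R^2$ small (near $R=1$) also needs the additive form of the bound, but this only affects logarithmic factors.
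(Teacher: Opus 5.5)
Your proposal is correct and follows the paper's proof. The threshold $\frac{s^{1/p}}{100\log(nsT)}$ together with $\ell_i(j,t)\le 5$ makes each server send with probability $\O{\log^p(nsT)/s}$, Chernoff bounds the message count, and the $\O{s}$ synchronization cost is paid only on the $\frac{1}{R^2}\polylog(nsT)$ sampled days. The only difference is organizational: you bound the number of sampled rounds and multiply by a per-round high-probability bound, whereas the paper folds $\varrho$ into each message probability and applies Chernoff to the total; your version is slightly more careful, because the single per-round coin in \algref{alg:tradeoff} makes the paper's indicators dependent within a round.
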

\begin{proof}
Consider \algref{alg:tradeoff}. 
Observe that at time $t\in[T]$, server $j\in[s]$ will communicate an expert $i\in[n]$ only if first it is sampled with probability $\varrho$ and then $q^{(b)}_i(j,t)\ge\frac{s^{1/p}}{100\log(nsT)}$. 
By assumption, we have $\ell_i(j,t)\le 5$ for all $t\in[T]$, $i\in[n]$ and $j\in[s]$. 
Since $q^{(b)}_i(j,t)=\frac{\ell_i(j,t)}{(e^{(b)}_i(j,t))^{1/p}}$ for $b\in[B]$, then for a fixed $b\in[B]$, server $j\in[s]$ will use communication only if $(e^{(b)}_i(j,t))^{1/p}\le\frac{500\log(nsT)}{s^{1/p}}$ or equivalently, if $e^{(b)}_i(j,t)\le\frac{500^p\log^p(nT)}{s}$ 
Since $(e^{(b)}_i(j,t))$ is an independent exponential random variable with rate $1$, then we have
\[\PPr{(e^{(b)}_i(j,t))^{1/p}\le\frac{500\log(nsT)}{s^{1/p}}}\lesssim\frac{\log^p(nT)}{s}.\]
Now for a fixed $i\in[n]$, let $Y_1,\ldots,Y_S$ denote indicator random variables for whether $q^{(b)}_i(j,t)$ triggers a message from the server to the coordinator. 
In other words, $Y_j=1$ if server $j$ sends a message due to $i$ and $Y_j=0$ otherwise. 
Then for each $j\in[s]$, we have 
\[\Ex{Y_j}\lesssim\varrho\cdot\frac{\log^p(nT)}{s},\]
since the event of probability $\varrho$ must occur first. 
Then by linearity of expectation, we have
\[\Ex{Y_1+\ldots+Y_s}\lesssim\varrho\cdot\log^p(nsT).\]
Similarly, we have that over all times and over all experts, the total expected amount of communication due to these events is at most $\O{\varrho\cdot nT\cdot\log^p(nsT)}$. 
Recall that \algref{alg:tradeoff} sets $\varrho=\frac{1}{R^2T}$. 
By standard Chernoff bounds, c.f., \thmref{thm:chernoff}, it follows that the total number of samples sent to the coordinator is at most $\frac{n}{R^2}\cdot\polylog(nsT)$, with high probability. 
On the other hand, the coordinator needs to sync with every server for all selected days, in case the server does not communicate anything. 
This process induces $\frac{s}{R^2}\cdot\polylog(nsT)$ communication with high probability, by a similar Chernoff bound argument. 
Therefore, with high probability, the total communication is at most $\left(\frac{n+s}{R^2}\right)\cdot\polylog(nsT)$, as claimed.  
\end{proof}
Putting together \lemref{lem:trade:regret} and \lemref{lem:trade:comm}, we have \thmref{thm:trade}. 
%\thmtrade*

\section{Full Algorithm}
\seclab{sec:full}
In this section, we build on our previous algorithms to present our main result, which achieves regret roughly $R$ while using $\left(\frac{n+s}{R^2}\right)\cdot\max(s^{1-2/p},1)\cdot\polylog(nsT)$ total bits of communication. 
Unlike the previous sections, we do not require each loss to be within an interval $[a,b]$ where $b>a>0$ are fixed constants. 
\algref{alg:full} differs from previous algorithms in that it permits the servers to communicate more aggressively by sending smaller values, in particular the values larger than a smaller threshold. 
However, this can only happen when a server is appropriately sampled. 

\begin{algorithm}[!htb]
\caption{Distributed protocol for the experts problem}
\alglab{alg:full}
\begin{algorithmic}[1]
\REQUIRE{Target regret $R$, losses $\{\ell_i(j,t)\}$ for all times $t\in[T]$, experts $i\in[n]$, and servers $j\in[s]$, $L_p$ loss parameter $p$}
\ENSURE{Sequence of experts to play on each day}
\STATE{$B\gets\left\lceil\frac{3}{p}\right\rceil$, $\varrho\gets\frac{1}{R^2T}\cdot\max(s^{1-2/p},1)$, $A\gets\left\lceil 10\log(nsT)\right\rceil$}
\FOR{each time $t\in[T]$}
\STATE{With probability $1-\varrho$, the servers do not send anything and the weights $\{w_i(t)\}$ are not updated}
\STATE{Otherwise, pick an integer $a>0$ with probability $\frac{\varrho}{2^a}$}
\STATE{All servers send values $q^{(b)}_i(j,t)$ that are at least $\frac{s^{1/p}}{100\cdot(2^a)^{1/p}\log(nsT)}$ to the coordinator}
\COMMENT{\lemref{lem:geo:exp:var}}
\STATE{Let $\widehat{q^{(b)}_i(j,t)}=\frac{q^{(b)}_i(j,t)}{C_{\ref{lem:geo:exp:var}}}$ if received by the coordinator, or set to zero otherwise}
\FOR{each $i\in[n]$}
\STATE{$\widehat{s_i(t)}\gets\cdot\prod_{b\in[B]}\left(\max_{j\in[s]}\widehat{q^{(b)}_i(j,t)}\right)^{1/B}$}
\STATE{Let $a^*\ge 0$ be the smallest non-negative integer such that $\widehat{s_i(t)}\ge\left(\frac{s}{2^{a^*}}\right)^{1/p}$}
\IF{$\widehat{s_i(t)}\neq 0$ and $a^*\le a$}
\STATE{$w_i(t)\gets w_i(t-1)+2^{a^*}R^2T\cdot\widehat{s_i(t)}$}
\ENDIF
\ENDFOR
\STATE{Play MWU on $\{w_i(t)\}$}
\ENDFOR
\end{algorithmic}
\end{algorithm}
We first consider the expectation and variance of \algref{alg:full}. 
We use properties of exponential random variables to show that up to a small perturbation, the expected valued of $\widehat{s_i(t)}$ is precisely the loss $L_i(t)$ of expert $i$ on time $t$. 
We use a similar argument for the second moment of $\widehat{s_i(t)}$. 
\begin{restatable}{lemma}{lemfullexpvar}
\lemlab{lem:full:exp:var}
Suppose $q^{(b)}_i(j,t)\ge\frac{1}{100\sqrt{T}}$ for all $b\in[B]$. 
Then we have $\Ex{\widehat{s_i(t)}}=L_i(t)+\frac{1}{\poly(nT)}$ and $\Ex{(\widehat{s_i(t)})^2}\le s^{2/p}\cdot R^2T$. 
\end{restatable}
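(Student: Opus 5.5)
We condition on the level $a$ drawn at time $t$ and write $\widehat{s_i(t)}$ as an importance-weighted estimator. The weight actually added is $2^{a^*}R^2T\cdot\widehat{s_i(t)}$, and only when $a^*\le a$, so in effect $\widehat{s_i(t)}$ is this quantity. Throughout, $G:=\prod_{b}\bigl(\max_j \widehat{q^{(b)}_i(j,t)}\bigr)^{1/B}$ denotes the untruncated geometric mean estimator.

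By \lemref{lem:max:stable}, for each $b$ the quantity $\max_j q^{(b)}_i(j,t)$ is distributed as $L_i(t)/e_b^{1/p}$ with $e_b$ independent. Hence, by \lemref{lem:geo:exp:var} together with the normalisation by $C_{\ref{lem:geo:exp:var}}$, we get $\Ex{G}=L_i(t)$ and $\Ex{G^2}\le 3^B L_i(t)^2/C_{\ref{lem:geo:exp:var}}^2$. The key point is that $a^*$ is a deterministic function of $G$: it is the dyadic level with $G\in[(s/2^{a^*})^{1/p},(s/2^{a^*-1})^{1/p})$. Consequently the event $a^*\le a$ depends only on the public level choice, which is independent of the exponentials.

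There are two sources of truncation. First, when $a\ge a^*$ the threshold $\frac{s^{1/p}}{100(2^a)^{1/p}\log(nsT)}$ is below $(s/2^{a^*})^{1/p}\le G$ by a $\log$ factor, so every coordinate whose maximum matters is received. A tail computation as in the earlier unnamed lemma (each $\max_j q^{(b)}$ exceeds $G$ divided by a polylog factor except with probability $1/\poly(nsT)$) shows the coordinator then reconstructs $G$ exactly. Second, the hypothesis $q^{(b)}_i(j,t)\ge\frac{1}{100\sqrt T}$ guarantees that $G$ is not astronomically small, so $a^*\le \log_2 s+O(\log T)\le A$ and the level range is finite. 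I will read this hypothesis as a lower bound on the maximum.

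With these facts, the expectation calculation is as follows. The level $a$ is drawn with probability $\varrho/2^a$, and I will interpret the sampling as $\Pr[a\ge a^*]\approx\varrho\,2^{-a^*}$ up to a constant that the normalisation in the algorithm absorbs. The factor $\varrho$ should cancel $R^2T$; with $\varrho=\frac{1}{R^2T}$ in the regime $p\le 2$, where $\max(s^{1-2/p},1)=1$, the reweighting $2^{a^*}R^2T$ exactly compensates. This gives
\[\Ex{\widehat{s_i(t)}}=\sum_{k}\Pr[a^*=k]\cdot\Pr[a\ge k]\cdot 2^{k}R^2T\cdot\Ex{G\mid a^*=k}=\Ex{G}=L_i(t),\]
plus the $1/\poly(nT)$ error from the failure events above, each of which contributes at most $\poly(nsT)\cdot\frac{1}{\poly(nsT)}$.

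For the second moment I expand the same sum:
\[\Ex{\widehat{s_i(t)}^2}\approx\sum_k\Pr[a^*=k]\cdot 2^{k}R^2T\cdot\Ex{G^2\mid a^*=k}.\]
On the event $a^*=k$ we have $G<(s/2^{k-1})^{1/p}$, so $2^k G^2\le 2s^{2/p}\,2^{k(1-2/p)}\cdot G^{0}$. More usefully, $2^kG^2\le 2\,G^{2-p}s$. If $p\ge 2$ this is at most $2s\cdot s^{(2-p)/p}\cdot 2^{-k(2-p)/p}\le 2s^{2/p}$ whenever $G\gtrsim$ the lower bound. If $p<2$, the factor $2^{k(1-2/p)}\le1$ directly gives $2^kG^2\le 2s^{2/p}$. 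Either way, summing over $k$ with the probabilities $\Pr[a^*=k]$ yields $O(s^{2/p})R^2T$ (with the $\max(s^{1-2/p},1)$ in $\varrho$ playing exactly this role for $p>2$), which is the claimed bound up to the constant.

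I expect the main obstacle to be the bookkeeping of the level distribution. The probabilities $\varrho/2^a$ must sum correctly to produce $\Pr[a\ge a^*]\propto 2^{-a^*}$, and the factor $\max(s^{1-2/p},1)$ in $\varrho$ must be consistent with the reweighting $2^{a^*}R^2T$. I would first fix the normalisation so that the expectation is exactly unbiased, and then check that the $p>2$ case of the second moment goes through with that normalisation. If it does not, I would absorb the mismatch into the $s^{1-2/p}$ factor.
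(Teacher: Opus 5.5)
Your treatment of $p\le 2$ is the paper's argument. Since $\varrho=\frac{1}{R^2T}$, the weight $2^{a^*}R^2T$ equals $1/\PPr{a\ge a^*}$ up to the constant you mention. The sampling probability therefore cancels to give $\Ex{G}=L_i(t)$, and the dyadic bracket on $G$ gives $2^{a^*}G^2\lesssim s^{2/p}(2^{a^*})^{1-2/p}\le s^{2/p}$.

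The genuine gap is the case $p>2$. This is exactly where the factor $\max(s^{1-2/p},1)$ in $\varrho$ matters, and despite your parenthetical remark, your derivation never uses it. There are two problems.

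\begin{itemize}
\item \textbf{Bias.} You keep the literal weight $2^{a^*}R^2T$ with $\varrho=\frac{s^{1-2/p}}{R^2T}$. Your own expectation computation then gives $\varrho R^2T\cdot\Ex{G}=s^{1-2/p}L_i(t)$. This is a multiplicative bias, and it cannot be ``absorbed into the $s^{1-2/p}$ factor'' while still claiming unbiasedness. The paper's proof analyses the estimator $G/\varrho_{a^*}$ with $\varrho_{a^*}=\varrho/2^{a^*}$, i.e.\ importance weight $2^{a^*}/\varrho$. That weight coincides with $2^{a^*}R^2T$ only when $p\le 2$.
\item \textbf{Sign error in the second moment.} On $\{a^*=k\}$, the bound $G\ge(s/2^k)^{1/p}$ together with $2-p<0$ gives $2sG^{2-p}\le 2s^{2/p}\,2^{k(p-2)/p}$. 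Here $2^{k(p-2)/p}\ge 1$, not $\le 1$. In fact $2^kG^2\ge s^{2/p}2^{k(1-2/p)}$ on this event, which is of order $s$ when $2^k\approx s$. Your first expression, $2^kG^2\lesssim s^{2/p}2^{k(1-2/p)}$, was the correct one.
\end{itemize}

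The missing step is the paper's cancellation. With weight $2^{a^*}/\varrho$, the second moment is $\lesssim\Ex{2^{a^*}G^2}/\varrho\lesssim(2^{a^*})^{1-2/p}s^{2/p}/\varrho$. Using $2^{a^*}\le s$, the growth $(2^{a^*})^{1-2/p}\le s^{1-2/p}$ is cancelled exactly by the $s^{1-2/p}$ in $\varrho$, giving $O(s^{2/p}R^2T)$. Note that this needs $2^{a^*}\le s$. Your reading of the hypothesis only gives $2^{a^*}\le s\cdot\poly(T)$, which would leave an extra $\poly(T)^{1-2/p}$ factor.

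Separately, the tail claim you use to say the coordinator reconstructs $G$ exactly is false when $B\ge 2$ (i.e.\ $p<3$). The threshold is applied to each $M_b=\max_j q^{(b)}_i(j,t)$ individually. The bound $G\ge(s/2^{a^*})^{1/p}$ does not force every $M_b$ above $G/K$, where $K=100\log(nsT)$. For example, take $B=2$ and write $M_b\propto e_b^{-1/p}$ with independent rate-$1$ exponentials $e_1,e_2$ via \lemref{lem:max:stable}. The event $M_1<G/K$ is then exactly $e_1>K^{2p}e_2$, which has probability $\frac{1}{1+K^{2p}}$. That is $1/\polylog(nsT)$, not $1/\poly(nsT)$. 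The paper does not establish this step either: it simply analyses a hypothetical process $\mathcal{P}$ in which all values are reported whenever $a^*\le a$. So this is not where you depart from the paper, but it is not a routine tail bound and would need a separate argument.
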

\begin{proof}
Let $a^*\ge 0$ be the smallest integer such that $\widehat{s_i(t)}\ge\left(\frac{s}{2^{a^*}}\right)^{1/p}$. 
Observe that we have $a^*\le a$ with probability $\varrho_{a^*}:=\frac{\varrho}{2^{a^*}}$. 
Moreover, we require that $a^*\le a$ to set $w_i(t)=w_i(t-1)+2^{a^*}\cdot\widehat{s_i(t)}$. 
Consider a hypothetical process $\calP$ where $q^{(b)}_i(j,t)$ is reported by all servers if $a^*\le a$. 
We shall ultimately show that $\calP$ holds with high probability so that the expected regret is also slightly affected by the actual process. 

Now, in this process, $\widehat{s_i(t)}$ is nonzero with probability $\varrho_{a*}$. 
Recall that $B=\left\lceil\frac{3}{p}\right\rceil$. 
Then by \lemref{lem:max:stable}, 
\begin{align*}
\Ex{\widehat{s_i(t)}}&=\Ex{\varrho_{a*}\cdot\frac{1}{\varrho_{a^*}}\cdot\prod_{b=1}^B\left(\max_{j\in[s]}q^{(b)}_i(j,t)\right)^{1/B}}\\
&=\prod_{b=1}^B\Ex{\left(\max_{j\in[s]}q^{(b)}_i(j,t)\right)^{1/B}}\\
&=\left(\Ex{\left(\max_{j\in[s]}q^{(b)}_i(j,t)\right)^{1/B}}\right)^B\\
&=\left(\Ex{\left(\frac{L_i(t)}{C_{\ref{lem:geo:exp:var}}e^{1/p}}\right)^{1/B}}\right)^B\\
&=\frac{L_i(t)}{C_{\ref{lem:geo:exp:var}}}\cdot\left(\Ex{\frac{1}{e^{1/Bp}}}\right)^B\\
&=L_i(t),
\end{align*}
where the last equality follows by the definition of $C_{\ref{lem:geo:exp:var}}=\left(\Ex{\frac{1}{e^{1/Bp}}}\right)^B$ in \lemref{lem:geo:exp:var}.

Similarly, 
\begin{align*}
\Ex{(\widehat{s_i(t)})^2}&=\Ex{\varrho_{a^*}\cdot\frac{1}{\varrho_{a^*}^2}\cdot\prod_{b=1}^B\left(\max_{j\in[s]}q^{(b)}_i(j,t)\right)^{2/B}}\\
&=\Ex{\frac{1}{\varrho_{a^*}}\prod_{b=1}^B\left(\max_{j\in[s]}q^{(b)}_i(j,t)\right)^{2/B}}.
\end{align*}
We have 
\[\prod_{b\in[B]}\left(\max_{j\in[s]}\widehat{q^{(b)}_i(j,t)}\right)^{1/B}\in\left[\left(\frac{s}{2^{a^*}}\right)^{1/p},\left(\frac{s}{2^{a^*-1}}\right)^{1/p}\right].\]
We also have $\varrho_{a^*}=\frac{\varrho}{2^{a^*}}$. 
Therefore,
\begin{align*}
\Ex{(\widehat{s_i(t)})^2}&=\Ex{\frac{1}{p_{a^*}}\cdot\prod_{b=1}^B\left(\max_{j\in[s]}q^{(b)}_i(j,t)\right)^{2/B}}\\
&\le\Ex{\frac{2^{a^*}}{\varrho}\cdot\left(\frac{s}{2^{a^*}}\right)^{2/p}}\\
&\le\frac{(2^{a^*})^{1-2/p}\cdot s^{2/p}}{\varrho}.
\end{align*}
Now, for $p\le 2$, we have $(2^{a^*})^{1-2/p}\le 1$ and $\varrho=\frac{1}{R^2T}\cdot\max(s^{1-2/p},1)=\frac{1}{R^2T}$, so that $\Ex{(\widehat{s_i(t)})^2}\le R^2 T$.

Similarly, for $p>2$, we have $\varrho=\frac{1}{R^2T}\cdot\max(s^{1-2/p},1)=\frac{s^{1-2/p}}{R^2T}$, so that 
\[\Ex{(\widehat{s_i(t)})^2}\le\frac{(2^{a^*})^{1-2/p}\cdot s^{2/p}\cdot R^2T}{s^{1-2/p}}.\] 
Since $2^{a^*}\le s$, then it follows that $\Ex{(\widehat{s_i(t)})^2}\le s^{2/p}\cdot R^2T$, as desired. 
\end{proof}

Next, we upper bound the regret and total communication of \algref{alg:full} as follows:
\begin{restatable}{lemma}{lemfullregret}
\lemlab{lem:full:regret}
Suppose we have $\ell_i(j,t)\le 1$ for all $t\in[T]$. 
The expected regret of \algref{alg:full} is at most $\O{Rs^{1/p}\sqrt{\log n}}$. 
\end{restatable}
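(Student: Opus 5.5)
The plan mirrors the proof of \lemref{lem:trade:regret}: first obtain bias and second-moment bounds for the estimators $\widehat{s_i(t)}$ that \algref{alg:full} feeds to MWU, then invoke \lemref{lem:regret:simulate}. Concretely, I will apply \lemref{lem:full:exp:var} with the hypothetical process $\calP$ from its proof. In $\calP$, every scaled value $q^{(b)}_i(j,t)$ needed to compute the true maximum $\max_{j\in[s]}q^{(b)}_i(j,t)$ is reported whenever the sampled level satisfies $a^*\le a$. In that process $\Ex{\widehat{s_i(t)}}=L_i(t)$ and the second moment is at most $\rho:=s^{2/p}R^2T$.

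Plugging $\rho$ into \lemref{lem:regret:simulate} gives regret for $\calP$ of
\[\O{\sqrt{\frac{\rho\log n}{T}}}=\O{\sqrt{\frac{s^{2/p}R^2T\log n}{T}}}=\O{Rs^{1/p}\sqrt{\log n}},\]
which is exactly the claimed bound. Here I treat $B=\lceil 3/p\rceil$ as a constant, so any $3^B$ factor is absorbed.

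The remaining work is to show that the actual execution of \algref{alg:full} differs from $\calP$ only on an event of probability $\frac{1}{\poly(nsT)}$. The two processes differ only if, for some $i,t,b$, the maximizing server's value lies below the reporting threshold $\frac{s^{1/p}}{100(2^a)^{1/p}\log(nsT)}$ even though $a^*\le a$. I will split by level.

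First, by \lemref{lem:max:stable}, $\max_j q^{(b)}_i(j,t)$ is distributed as $L_i(t)/(C_{\ref{lem:geo:exp:var}}e^{1/p})$. Second, $a^*\le a$ forces the geometric mean to be at least $(s/2^a)^{1/p}$. So for some $b$ the maximum could be below $(s/2^a)^{1/p}/(100\log(nsT))$ only if the $B$ exponentials are extremely unbalanced: one $e_b$ would have to be at least $\polylog$ times larger than its typical size relative to the others.

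I would bound this probability with the exponential tail $\PPr{e>x}=e^{-x}$, in the same way as the unlabeled lemma preceding \lemref{lem:simple:regret}, and then take a union bound over $i\in[n]$, $t\in[T]$, $b\in[B]$. The hypothesis $q^{(b)}_i(j,t)\ge\frac{1}{100\sqrt T}$ in \lemref{lem:full:exp:var} is exactly what I will verify, or assume, here.

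I expect this coupling step to be the main obstacle, because the losses are no longer bounded below. When $L_i(t)$ is tiny, $a^*$ may be capped at $A=\lceil 10\log(nsT)\rceil$ and $\widehat{s_i(t)}$ is truncated to zero. For this case I plan to charge the induced bias directly. Any expert whose estimate falls below $(s/2^{A})^{1/p}$ has true loss at most $\polylog(nsT)\cdot s^{1/p}/\poly(nsT)$ with high probability. Summed over $T$ rounds and normalized by $T$, this contributes at most $\frac{1}{\poly(nsT)}$ to the regret.

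Together with the $\frac{1}{\poly(nsT)}\cdot T$ contribution of the failure event, where each loss is at most $s^{1/p}$ since $\ell_i(j,t)\le 1$, all of these lower-order terms are dominated by $Rs^{1/p}\sqrt{\log n}$, using $R\ge 1/\sqrt{T}$.
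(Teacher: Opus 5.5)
If you take \lemref{lem:full:exp:var} as stated, including its bias bound $\Ex{\widehat{s_i(t)}}=L_i(t)+\frac{1}{\poly(nT)}$ for the actual \algref{alg:full}, then your first two paragraphs are the paper's entire proof: plug $\rho=s^{2/p}R^2T$ into \lemref{lem:regret:simulate} and absorb the bias. The extra step you propose, passing from $\calP$ to the real protocol, fails as sketched; note that the paper never carries this step out either. You correctly see that a discrepancy requires the exponentials to be unbalanced, i.e.\ $\max_b e_b/(\prod_{b} e_b)^{1/B}\gtrsim(100\log(nsT))^p$ up to constants. But this ratio becomes large mainly because some \emph{other} $e_{b'}$ is small, not because $e_b$ is large. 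The lower tail $\PPr{e<x}\approx x$ is only polynomial, so $\PPr{e>x}=e^{-x}$ does not control the event. For instance, take $p=2$ and $B=2$. The smaller of the two maxima equals, up to a constant, the geometric mean times $(e_{\min}/e_{\max})^{1/4}$. So when the level $a=a^*$ is drawn, the coordinator misses a maximum whenever $e_{\min}/e_{\max}\lesssim(100\log(nsT))^{-4}$, an event of probability $\Theta((\log(nsT))^{-4})$. For every $p<3$ (that is, $B\ge 2$, which includes $p=1,2$), the failure probability is therefore only $1/\polylog(nsT)$, and the union bound over $i,t,b$ yields nothing. Only for $p\ge 3$, where $B=1$, is your coupling immediate.

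The natural fallback is to bound the expected downward bias rather than a failure probability, since the real estimator is pointwise at most the one in $\calP$. This also misses $1/\poly$. In the same example the lost mass is of order $x^{3/4}L_i(t)$ with $x=\Theta((\log(nsT))^{-4})$, i.e.\ a $(\log(nsT))^{-3}$ fraction of $L_i(t)$. Carried through the proof of \lemref{lem:regret:simulate}, such a bias only yields an extra additive term of order $s^{1/p}/\polylog(nsT)$ in the normalized regret. That term is not dominated by $Rs^{1/p}\sqrt{\log n}$ once $R\ll 1/\polylog(nsT)$. So closing the gap between $\calP$ and \algref{alg:full} needs a genuinely different argument or a modified protocol, not a sharper tail bound. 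As written, your proof rests on this point exactly as the paper's rests on \lemref{lem:full:exp:var}. Separately, the hypothesis $q^{(b)}_i(j,t)\ge\frac{1}{100\sqrt T}$ of \lemref{lem:full:exp:var} cannot be ``verified'' here, since $\ell_i(j,t)=0$ forces $q^{(b)}_i(j,t)=0$. Assuming it adds a condition absent from the statement; the paper's proof silently ignores it too.
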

\begin{proof}
Since the loss on each server is at most $1$, then the loss $L_i(t)$ on each expert on each day is at most $\O{s^{1/p}}$. 
By \lemref{lem:full:exp:var}, we have that $\Ex{\widehat{s_i(t)}}=L_i(t)+\frac{1}{\poly(nT)}$ and $\Ex{(\widehat{s_i(t)})^2}\le s^{2/p}\cdot R^2T$. 
Thus by \lemref{lem:regret:simulate}, the expected regret of \algref{alg:full} is at most $\O{Rs^{1/p}\sqrt{\log n}}+\frac{1}{\poly(nT)}\cdot T=\O{Rs^{1/p}\sqrt{\log n}}$.
\end{proof}

It remains to upper bound the total communication of \algref{alg:full}. 
\begin{restatable}{lemma}{lemfullcomm}
\lemlab{lem:full:comm}
Suppose we have $\ell_i(j,t)\le 1$ for all $t\in[T]$. 
Then with high probability, the total communication for \algref{alg:full} is at most $\left(\frac{n+s}{R^2}\right)\cdot\max(s^{1-2/p},1)\cdot\polylog(nsT)$ bits. 
\end{restatable}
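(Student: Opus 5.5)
We follow the template of \lemref{lem:trade:comm}, summing over the random level $a$ chosen on each sampled time. On a sampled time, a server $j$ sends $q^{(b)}_i(j,t)$ exactly when it is at least the threshold $\tau_a:=\frac{s^{1/p}}{100\cdot(2^a)^{1/p}\log(nsT)}$. We bound the number of such messages by conditioning on $a$, then average over the distribution of $a$, and finally add the cost of the coordinator syncing with the servers on sampled times.

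\textbf{Step 1 (per-level message count).} Fix $a$, $t$, $i$, $b$. We have $\ell_i(j,t)\le 1$ and $C_{\ref{lem:geo:exp:var}}=\Theta(1)$, so $q^{(b)}_i(j,t)\ge\tau_a$ forces
\[
e^{(b)}_i(j,t)\lesssim\frac{2^a\log^p(nsT)}{s}.
\]
For a rate-$1$ exponential this event has probability at most $\min\left(1,\O{\frac{2^a\log^p(nsT)}{s}}\right)$. Summing over $j\in[s]$ and $b\in[B]$ with $B=\O{1}$, the expected number of messages for expert $i$ at time $t$, conditioned on level $a$, is $\O{\min(s,2^a\log^p(nsT))}$.

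\textbf{Step 2 (averaging over $a$).} Level $a$ is selected with probability $\frac{\varrho}{2^a}$. The expected number of messages per time and per expert is therefore
\[
\sum_{a\ge1}\frac{\varrho}{2^a}\min\left(s,2^a\log^p(nsT)\right)\le\varrho\cdot\log^p(nsT)\cdot\#\{a:2^a\le s\}+\varrho\sum_{2^a>s}\frac{s}{2^a}=\O{\varrho\log^{p+1}(nsT)}.
\]
The tail sum is geometric. We also truncate $a\le A=\lceil 10\log(nsT)\rceil$: any level with $2^a>s\cdot\poly$ admits every value, so it costs at most $s$ messages while occurring with probability at most $\varrho/2^a$, which is negligible. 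Multiplying by $nT$ and using $\varrho=\frac{\max(s^{1-2/p},1)}{R^2T}$ gives expected total $\frac{n}{R^2}\max(s^{1-2/p},1)\polylog(nsT)$. Each message is a value, which rounded to $\poly(nsT)$ precision costs $\O{\log(nsT)}$ bits.

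\textbf{Step 3 (high probability and sync cost).} The number of sampled times is a sum of independent Bernoulli$(\varrho)$ variables with mean $\varrho T=\frac{\max(s^{1-2/p},1)}{R^2}\ge1$. By \thmref{thm:chernoff} it is $\O{\varrho T\log(nsT)}$ with high probability. Each sampled time costs $\O{s\log(nsT)}$ bits for the coordinator to broadcast $a$ and sync with every server; this can be avoided with public randomness, but it is affordable either way. The sync cost is thus $\frac{s}{R^2}\max(s^{1-2/p},1)\polylog(nsT)$.

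For the messages, we first condition on the set of sampled times and their levels. The message indicators are then independent across $(i,j,b)$, and a Chernoff bound applies per time step: the count is at most its mean plus $\O{\log(nsT)}$. This additive term is charged to each sampled time and each expert, giving $n\varrho T\polylog$, which matches the bound. A union bound over all times finishes the argument.

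\textbf{Main obstacle.} The most delicate part is Step 2 together with this concentration: large levels $a$ can trigger up to $s$ messages at once, so the Chernoff argument must be done conditionally on $a$. We must then check that $\E[\min(s,2^a)]/2^a$ summed over levels stays polylogarithmic rather than picking up a factor of $s$.
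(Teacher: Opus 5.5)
\textbf{Gap in Step 3: the high-probability bound is not established.} Your Steps 1 and 2 follow the paper's argument. For a fixed level $a$, a server fires with probability $\frac{\varrho}{2^a}\cdot\O{\frac{2^a\log^p(nsT)}{s}}$. Summing over servers, levels, experts and times gives $\O{\varrho nT\log^{p+1}(nsT)}$ messages. Your $\min(s,\cdot)$ treatment of unbounded levels is more careful than the paper's implicit restriction to $a\in[A]$. These steps correctly bound the \emph{expected} communication.

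The problem is Step 3. Conditioned on the sampled times and their levels, Chernoff bounds the count at sampled time $t$ by $\O{\mu_t+\log(nsT)}$. In the worst case $\ell_i(j,t)=1$, the conditional mean is $\mu_t=\Theta\left(n\min(s,2^{a_t}\log^p(nsT))\right)$. You charge only the additive term and implicitly replace $\sum_t\mu_t$ by its average over levels from Step 2. But $\sum_t\mu_t$ is a sum of heavy-tailed variables, since $\PPr{2^{a_t}\ge x\mid t\text{ sampled}}=\Theta(1/x)$, and it does not concentrate. Checking that the expectation over levels stays polylogarithmic, as your ``main obstacle'' paragraph proposes, is not enough.

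\textbf{Explicit bad event.} Take all losses equal to $1$.
\begin{itemize}
\item On a sampled time with $2^{a_t}\ge s$, the threshold is at most $\frac{1}{100\log(nsT)}$. So each of the $nsB$ scaled values is sent with probability $1-\frac{1}{\poly(nsT)}$.
\item Such a level occurs at some time with probability $\Theta(\min(1,\varrho T/s))$.
\item For $p\le2$ this probability is $\Theta(\min(1,\frac{1}{R^2s}))$, which is at least of order $\frac1s$ when $R\le 1$. That is nowhere near $\frac{1}{\poly(nsT)}$.
\item On this event, $ns$ exceeds $\frac{n+s}{R^2}\polylog(nsT)$ as soon as $R^2\min(n,s)\gg\polylog(nsT)$.
\end{itemize}
So no concentration argument can close Step 3. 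For the algorithm as written, which uses a single level per time, the high-probability claim fails in this regime.

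\textbf{Relation to the paper.} The paper's proof has the same hole. It invokes Chernoff on the indicators $Y_j$ as though they were independent across servers, experts and times. In fact they are all coupled through the shared sampling event and the shared level $a$. Your explicit conditioning exposes the issue but does not resolve it.

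\textbf{What your argument does give, and one possible repair.} Your argument establishes the bound in expectation, and hence with constant probability by Markov. A genuine high-probability bound needs a change to the protocol. One option is to draw the level independently for each expert on each sampled time, keeping the time-sampling common so the sync cost is unchanged. Then each (time, expert) pair contributes an independent count of at most $sB$. Bernstein's inequality then adds only $\O{s\log(nsT)}$ on top of twice the mean, which fits inside the $\frac{s}{R^2}$ term.
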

\begin{proof}
Consider \algref{alg:full} and a fixed $a \in [A]$. 
At time $t \in [T]$, server $j \in [s]$ will communicate with expert $i \in [n]$ only if it is first selected with probability $\frac{\varrho}{2^a}$, and then $q^{(b)}_i(j,t) \ge \frac{s^{1/p}}{100 \cdot (2^a)^{1/p} \log(nsT)}$. 
By assumption, we have $\ell_i(j,t) \le 1$ for all $t \in [T]$, $i \in [n]$, and $j \in [s]$. 
Since $q^{(b)}_i(j,t) = \frac{\ell_i(j,t)}{(e^{(b)}_i(j,t))^{1/p}}$ for $b \in [B]$, for a fixed $b \in [B]$, server $j \in [s]$ will communicate only if $(e^{(b)}_i(j,t))^{1/p} \le \frac{100 \cdot (2^a)^{1/p} \log(nsT)}{s^{1/p}}$, or equivalently, if $e^{(b)}_i(j,t) \le \frac{100^p \cdot 2^a \cdot \log^p(nT)}{s}$. 
Since $(e^{(b)}_i(j,t))$ is an independent exponential random variable with rate $1$, we have
\begin{align*}
\PPr{(e^{(b)}_i(j,t))^{1/p}\le \frac{100 \cdot (2^a)^{1/p} \log(nsT)}{s^{1/p}}}\lesssim \frac{2^a \cdot \log^p(nT)}{s}
\end{align*}
Now, for a fixed $i \in [n]$, let $Y_1, \ldots, Y_S$ denote indicator random variables indicating whether $q^{(b)}_i(j,t)$ triggers a message from the server to the coordinator. In other words, $Y_j = 1$ if server $j$ sends a message due to $i$, and $Y_j = 0$ otherwise. 
Then, for each $j \in [s]$, we have
\[
\Ex{Y_j} \lesssim \varrho \cdot \frac{\log^p(nT)}{s},
\]
since the event with probability $\frac{\varrho}{2^a}$ must occur first. 
By linearity of expectation, we have
\[
\Ex{Y_1 + \ldots + Y_s} \lesssim \varrho \cdot \log^p(nsT).
\]
Similarly, over all times and experts, the total expected communication due to these events is at most $\O{\varrho \cdot nT \cdot \log^p(nsT)}$. 
Recall that \algref{alg:full} sets $\varrho = \frac{1}{R^2T} \cdot \max(s^{1-2/p}, 1)$. 
By standard Chernoff bounds, as in \thmref{thm:chernoff}, the total number of samples sent to the coordinator is at most $\frac{n}{R^2} \cdot \polylog(nsT) \cdot \max(s^{1-2/p}, 1)$, with high probability. 
On the other hand, the coordinator must sync with every server for all selected days in case the server does not communicate anything. 
This process induces $\frac{s}{R^2} \cdot \polylog(nsT) \cdot \max(s^{1-2/p}, 1)$ communication with high probability, by a similar Chernoff bound argument. 
Thus, with high probability, the total communication is at most $\left( \frac{n+s}{R^2} \right) \cdot \max(s^{1-2/p}, 1) \cdot \polylog(nsT)$, as desired.  
\end{proof}

Putting together \lemref{lem:full:regret} and \lemref{lem:full:comm}, we have our main result:
\thmfull*

\section{Empirical Evaluations}
\seclab{sec:exp}
In this section, we perform experimental evaluations on our distributed protocol, comparing it to both the standard offline MWU algorithm and the previous protocol of \cite{jia2025communication}. 
Specifically, we study the HPO-B dataset~\cite{Pineda-ArangoJW21}, which is a benchmark designed to evaluate and compare the performance of hyperparameter optimization (HPO) algorithms across various machine learning tasks. 
We conducted our empirical evaluations using Python 3.11.5 on a 64-bit operating system with an Intel(R) Core(TM) i7-3770 processor, featuring 4 cores, a base clock speed of 3.4GHz, and 16GB of RAM.
The dataset includes a diverse set of datasets and learning tasks, such as classification and regression, to assess the effectiveness, efficiency, and computational cost of different HPO methods. 
To support transparency and facilitate reproducibility of our results, we provide a complete implementation of all algorithms, experimental setups, and data pre-processing routines, which are available at \url{https://github.com/samsonzhou/distributed-experts}. 

As a black-box benchmark for hyperparameter optimization, we can treat the various models in the HPO-B benchmark as distinct experts in the distributed experts problem, with different datasets assigned to different servers. 
Each search step, which is random search for all model classes, can be viewed as one day in the distributed experts scenario. 
The cost vector represents the normalized negative accuracy of the models on the different datasets during a search step. 
We thus compared our simple distributed protocol from \algref{alg:simple} against MWU and against the algorithm of \cite{jia2025communication}. 
Our results in \figref{fig:comm:p} show that for $p>1$, the total communication increases as the threshold increases; however, the trend is reversed for $p<1$. 
\figref{fig:reward:p} shows that our algorithm surprisingly has better reward than the MWU algorithm; we believe the latter is an issue of fine-tuning the learning rate of MWU. 
Finally, \figref{fig:comm:regret} shows that for the special case of $p=1$, our algorithm achieves better communication than the algorithm of \cite{jia2025communication}. 
Additionally, \figref{fig:comm:regret} exhibits the communication-regret trade-off as expected from our theoretical analysis. 

%%%%%%%%%%%%%%%%%%%MOVE TO EXPERIMENTS%%%%%%%%%%%%%%%%
%%%%%%%%%%%%%%%%%%%MOVE TO EXPERIMENTS%%%%%%%%%%%%%%%%
%%%%%%%%%%%%%%%%%%%MOVE TO EXPERIMENTS%%%%%%%%%%%%%%%%
\begin{figure*}[!htb]
    \centering
    \begin{subfigure}{0.325\textwidth}
        \centering
        \includegraphics[scale=0.25]{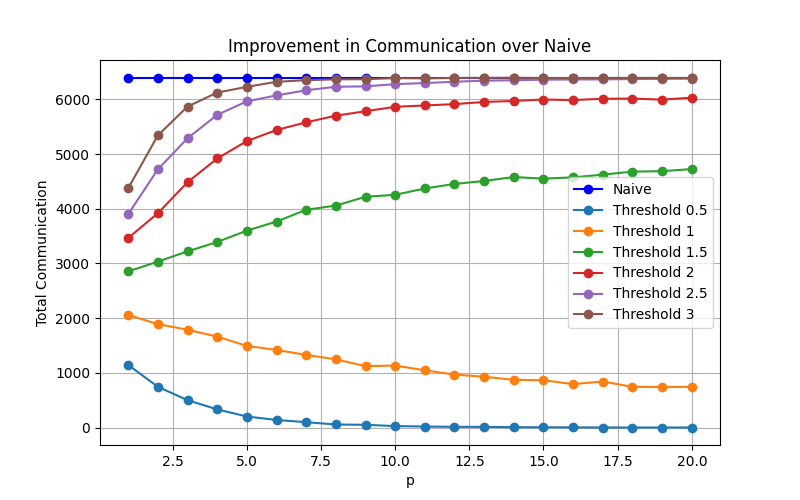}
        \caption{Communication vs. p}
        \figlab{fig:comm:p}
    \end{subfigure}
    \begin{subfigure}{0.325\textwidth}
        \centering
        \includegraphics[scale=0.25]{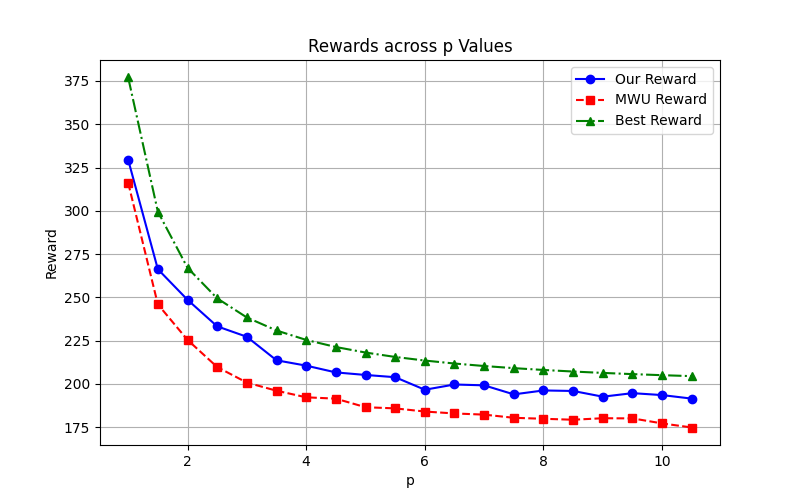}
        \caption{Reward vs. p}
        \figlab{fig:reward:p}
    \end{subfigure}
    \begin{subfigure}{0.325\textwidth}
        \centering
        \includegraphics[scale=0.25]{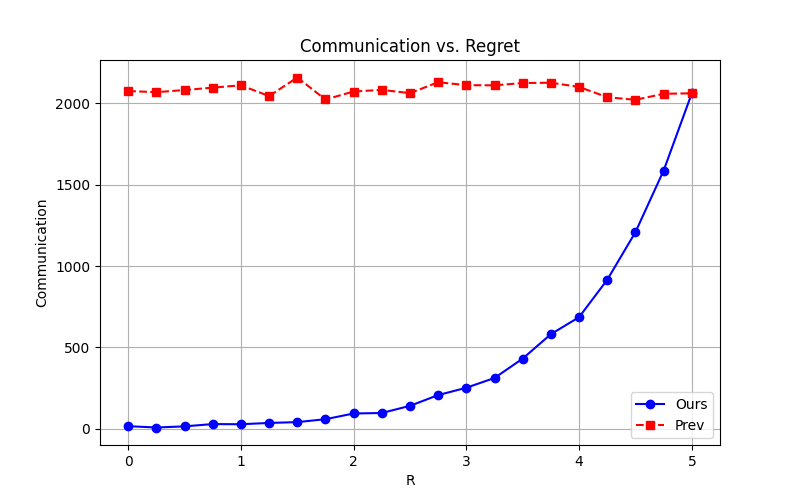}
        \caption{Communication vs. Regret}
        \figlab{fig:comm:regret}
    \end{subfigure}
    %\caption{Three images side by side}
    %\label{fig:main}
\end{figure*}

\section*{Conclusion} 
In this paper, we studied the distributed online learning with experts problem in the coordinator model, where each expert's loss is implicitly defined across $s$ servers. 
We introduced the first protocol for general $\ell_p$ losses, achieving regret $R \gtrsim \frac{1}{\sqrt{T} \cdot \text{poly}\log(nsT)}$ with communication 
$\mathcal{O}\left(\left(\frac{n}{R^2}+\frac{s}{R^2}\right)\cdot\max(s^{1-2/p},1)\cdot\text{poly}\log(nsT)\right)$. 
For the important special case where nonzero server losses are bounded, our protocol attains regret $\O{Rs^{1/p}\sqrt{\log n}}$ using total communication $\left(\frac{n+s}{R^2}\right)\cdot\polylog(nsT)$, improving on prior bounds~\cite{jia2025communication}, which only handle $p=1$.  
A key technical contribution lies in our algorithmic framework for $\ell_p$ losses, which are significantly more nuanced than $\ell_1$. 
By embedding $\ell_p$ losses into an $\ell_\infty$ structure using random exponential scalings, we reduce the problem to tracking the largest scaled contribution to the loss across all experts. 
To control the unbounded variance inherent in exponential random variables, we introduce a geometric mean estimator over independent scalings, yielding an unbiased estimate with provably bounded variance. 
This estimator is both an algorithmic and a technical novelty, and it may have broader applicability beyond distributed online learning with experts. 
While the geometric mean is highly effective, alternative variance-reduction techniques could also be explored, representing a promising and fruitful direction for future work.  
Finally, our results highlight the fundamental tradeoff between communication and regret in distributed online learning with general $\ell_p$ losses. 
Extending these techniques to other structured loss functions, such as submodular objectives or $\ell_\infty$ (max) losses, presents an exciting avenue for further advancing distributed sequential decision-making.

\section*{Acknowledgments}
David P. Woodruff is supported in part Office of Naval Research award number N000142112647, and a Simons Investigator Award. 
Samson Zhou is supported in part by NSF CCF-2335411. 
Samson Zhou gratefully acknowledges funding provided by the Oak Ridge Associated Universities (ORAU) Ralph E. Powe Junior Faculty Enhancement Award.

\def\shortbib{0}
\bibliographystyle{alpha}
\bibliography{references}

\newcommand{\etalchar}[1]{$^{#1}$}
\begin{thebibliography}{CMVW16}

\bibitem[ACFS02]{AuerCFS02}
Peter Auer, Nicol{\`{o}} Cesa{-}Bianchi, Yoav Freund, and Robert~E. Schapire.
\newblock The nonstochastic multiarmed bandit problem.
\newblock {\em {SIAM} J. Comput.}, 32(1):48--77, 2002.

\bibitem[ACNS23]{AamandCNS23}
Anders Aamand, Justin~Y. Chen, Huy~L{\^{e}} Nguyen, and Sandeep Silwal.
\newblock Improved space bounds for learning with experts.
\newblock {\em CoRR}, abs/2303.01453, 2023.

\bibitem[AHK12]{AroraHK12}
Sanjeev Arora, Elad Hazan, and Satyen Kale.
\newblock The multiplicative weights update method: a meta-algorithm and applications.
\newblock {\em Theory Comput.}, 8(1):121--164, 2012.

\bibitem[AW20]{AssadiW20}
Sepehr Assadi and Chen Wang.
\newblock Exploration with limited memory: streaming algorithms for coin tossing, noisy comparisons, and multi-armed bandits.
\newblock In {\em Proceedings of the 52nd Annual {ACM} {SIGACT} Symposium on Theory of Computing, {STOC}}, pages 1237--1250, 2020.

\bibitem[BGW{\etalchar{+}}26]{BravermanGWWZ26}
Vladimir Braverman, Sumegha Garg, Chen Wang, David~P. Woodruff, and Samson Zhou.
\newblock Online learning with limited information in the sliding window model.
\newblock In {\em Proceedings of the {ACM-SIAM} Symposium on Discrete Algorithms, {SODA}}, 2026.

\bibitem[Bro51]{brown1951iterative}
George~W Brown.
\newblock Iterative solution of games by fictitious play.
\newblock {\em Act. Anal. Prod Allocation}, 13(1):374, 1951.

\bibitem[Cla95]{Clarkson95}
Kenneth~L. Clarkson.
\newblock Las vegas algorithms for linear and integer programming when the dimension is small.
\newblock {\em J. {ACM}}, 42(2):488--499, 1995.

\bibitem[CMVW16]{CrouchMVW16}
Michael~S. Crouch, Andrew McGregor, Gregory Valiant, and David~P. Woodruff.
\newblock Stochastic streams: Sample complexity vs. space complexity.
\newblock In {\em 24th Annual European Symposium on Algorithms, {ESA}}, pages 32:1--32:15, 2016.

\bibitem[Cov66]{cover1966behavior}
Thomas~M Cover.
\newblock {\em Behavior of sequential predictors of binary sequences}.
\newblock Number 7002 in Stanford Electronics Laboratories Technical Report. Stanford University, Systems Theory, 1966.

\bibitem[DS18]{DaganS18}
Yuval Dagan and Ohad Shamir.
\newblock Detecting correlations with little memory and communication.
\newblock In {\em Conference On Learning Theory, {COLT}}, pages 1145--1198, 2018.

\bibitem[FS97]{freund1997decision}
Yoav Freund and Robert~E Schapire.
\newblock A decision-theoretic generalization of on-line learning and an application to boosting.
\newblock {\em Journal of computer and system sciences}, 55(1):119--139, 1997.

\bibitem[FS99]{freund1999adaptive}
Yoav Freund and Robert~E Schapire.
\newblock Adaptive game playing using multiplicative weights.
\newblock {\em Games and Economic Behavior}, 29(1-2):79--103, 1999.

\bibitem[GK07]{GargK07}
Naveen Garg and Jochen K{\"{o}}nemann.
\newblock Faster and simpler algorithms for multicommodity flow and other fractional packing problems.
\newblock {\em {SIAM} J. Comput.}, 37(2):630--652, 2007.

\bibitem[GRT18]{GargRT18}
Sumegha Garg, Ran Raz, and Avishay Tal.
\newblock Extractor-based time-space lower bounds for learning.
\newblock In {\em Proceedings of the 50th Annual {ACM} {SIGACT} Symposium on Theory of Computing, {STOC}}, pages 990--1002, 2018.

\bibitem[GRT19]{GargRT19}
Sumegha Garg, Ran Raz, and Avishay Tal.
\newblock Time-space lower bounds for two-pass learning.
\newblock In {\em 34th Computational Complexity Conference, {CCC}}, pages 22:1--22:39, 2019.

\bibitem[HNO08]{HarveyNO08}
Nicholas J.~A. Harvey, Jelani Nelson, and Krzysztof Onak.
\newblock Sketching and streaming entropy via approximation theory.
\newblock In {\em 49th Annual {IEEE} Symposium on Foundations of Computer Science, {FOCS}}, pages 489--498, 2008.

\bibitem[JPT{\etalchar{+}}25]{jia2025communication}
Zhihao Jia, Qi~Pang, Trung Tran, David Woodruff, Zhihao Zhang, and Wenting Zheng.
\newblock Communication bounds for the distributed experts problem.
\newblock {\em arXiv preprint arXiv:2501.03132}, 2025.

\bibitem[KRT17]{KolRT17}
Gillat Kol, Ran Raz, and Avishay Tal.
\newblock Time-space hardness of learning sparse parities.
\newblock In {\em Proceedings of the 49th Annual {ACM} {SIGACT} Symposium on Theory of Computing, {STOC}}, pages 1067--1080, 2017.

\bibitem[KV05]{KalaiV05}
Adam~Tauman Kalai and Santosh~S. Vempala.
\newblock Efficient algorithms for online decision problems.
\newblock {\em J. Comput. Syst. Sci.}, 71(3):291--307, 2005.

\bibitem[Li08]{Li08}
Ping Li.
\newblock Estimators and tail bounds for dimension reduction in $\ell_\alpha$ ($0<p\le 2$) using stable random projections.
\newblock In {\em Proceedings of the Nineteenth Annual {ACM-SIAM} Symposium on Discrete Algorithms, {SODA}}, pages 10--19, 2008.

\bibitem[LSPY18]{LiauSPY18}
David Liau, Zhao Song, Eric Price, and Ger Yang.
\newblock Stochastic multi-armed bandits in constant space.
\newblock In {\em International Conference on Artificial Intelligence and Statistics, {AISTATS}}, pages 386--394, 2018.

\bibitem[LW94]{LittlestoneW94}
Nick Littlestone and Manfred~K. Warmuth.
\newblock The weighted majority algorithm.
\newblock {\em Inf. Comput.}, 108(2):212--261, 1994.

\bibitem[McM11]{McMahan11a}
H.~Brendan McMahan.
\newblock Follow-the-regularized-leader and mirror descent: Equivalence theorems and {L1} regularization.
\newblock In {\em Proceedings of the Fourteenth International Conference on Artificial Intelligence and Statistics, {AISTATS}}, pages 525--533, 2011.

\bibitem[PJWG21]{Pineda-ArangoJW21}
Sebastian Pineda{-}Arango, Hadi~S. Jomaa, Martin Wistuba, and Josif Grabocka.
\newblock {HPO-B:} {A} large-scale reproducible benchmark for black-box {HPO} based on openml.
\newblock In Joaquin Vanschoren and Sai{-}Kit Yeung, editors, {\em Proceedings of the Neural Information Processing Systems Track on Datasets and Benchmarks}, 2021.

\bibitem[PR23]{PengR23}
Binghui Peng and Aviad Rubinstein.
\newblock Near optimal memory-regret tradeoff for online learning.
\newblock In {\em 64th {IEEE} Annual Symposium on Foundations of Computer Science, {FOCS}}, pages 1171--1194, 2023.

\bibitem[PST95]{PlotkinST95}
Serge~A. Plotkin, David~B. Shmoys, and {\'{E}}va Tardos.
\newblock Fast approximation algorithms for fractional packing and covering problems.
\newblock {\em Math. Oper. Res.}, 20(2):257--301, 1995.

\bibitem[PZ23]{PengZ23}
Binghui Peng and Fred Zhang.
\newblock Online prediction in sub-linear space.
\newblock In {\em Proceedings of the 2023 {ACM-SIAM} Symposium on Discrete Algorithms, {SODA}}, pages 1611--1634, 2023.

\bibitem[Raz17]{Raz17}
Ran Raz.
\newblock A time-space lower bound for a large class of learning problems.
\newblock In {\em 58th {IEEE} Annual Symposium on Foundations of Computer Science, {FOCS}}, pages 732--742, 2017.

\bibitem[Raz19]{Raz19}
Ran Raz.
\newblock Fast learning requires good memory: {A} time-space lower bound for parity learning.
\newblock {\em J. {ACM}}, 66(1):3:1--3:18, 2019.

\bibitem[SWXZ22]{SrinivasWXZ22}
Vaidehi Srinivas, David~P. Woodruff, Ziyu Xu, and Samson Zhou.
\newblock Memory bounds for the experts problem.
\newblock In {\em {STOC} '22: 54th Annual {ACM} {SIGACT} Symposium on Theory of Computing}, pages 1158--1171, 2022.

\bibitem[WZ21]{WoodruffZ21}
David~P. Woodruff and Samson Zhou.
\newblock Tight bounds for adversarially robust streams and sliding windows via difference estimators.
\newblock In {\em 62nd {IEEE} Annual Symposium on Foundations of Computer Science, {FOCS} 2021}, pages 1183--1196, 2021.

\bibitem[WZZ23]{WoodruffZZ23a}
David~P. Woodruff, Fred Zhang, and Samson Zhou.
\newblock On robust streaming for learning with experts: Algorithms and lower bounds.
\newblock In {\em Advances in Neural Information Processing Systems 36: Annual Conference on Neural Information Processing Systems, NeurIPS}, 2023.

\end{thebibliography}

\end{document}